\documentclass{article}

\usepackage[preprint]{neurips_2026}

\usepackage[utf8]{inputenc} 
\usepackage[T1]{fontenc}    
\usepackage{hyperref}       
\usepackage{url}            
\usepackage{booktabs}       
\usepackage{amsfonts}       
\usepackage{nicefrac}       
\usepackage{microtype}      
\usepackage[table]{xcolor}
 \usepackage{graphicx}
 \usepackage{booktabs}
\usepackage{multirow}
\usepackage{threeparttable}
\usepackage{makecell}
\usepackage{enumitem}
\usepackage{amsmath}
\usepackage{amsthm}
\usepackage[normalem]{ulem}
\usepackage{subcaption}
\usepackage[ruled,vlined,linesnumbered]{algorithm2e}
\usepackage{float}
\newtheorem{theorem}{Theorem}

\SetKwInOut{Input}{Input}
\SetKwInOut{Output}{Output}
\SetKw{KwRet}{return}
\DontPrintSemicolon
\SetAlFnt{\small}
\SetAlCapFnt{\small}
\SetAlCapNameFnt{\small}
\SetAlgoNlRelativeSize{-1}

\title{SWIFT: Prompt-Adaptive Memory for Efficient Interactive Long Video Generation}
\author{%
  \normalfont
  Shanwen Tan\textsuperscript{1} \quad
  Hao Li\textsuperscript{2} \quad
  Jingtao Zhang\textsuperscript{3} \quad
  Xiaosong Jia\textsuperscript{2} \\
  Xue Yang\textsuperscript{4} \quad
  Shaofeng Zhang\textsuperscript{1}\thanks{Corresponding author.} \quad
  Yanyong Zhang\textsuperscript{1} \\
  \textsuperscript{1}University of Science and Technology of China \quad
  \textsuperscript{2}Fudan University \\
  \textsuperscript{3}Georgia Institute of Technology \quad
  \textsuperscript{4}Shanghai Jiao Tong University
}
\begin{document}

\maketitle

\begin{abstract}
  Streaming long-video generation faces a central challenge in continuous semantic switching, requiring adaptive memory to preserve coherent visual evolution. Current approaches rely on cache rebuilding at prompt boundaries or fixed memory budgets, but they introduce redundant computation and limit flexible semantic adaptation. This limitation arises from a mismatch between cached video history and prompt updates, as memory preserves visual continuity while prompt switches demand rapid semantic adaptation. Motivated by this observation, we present \textbf{SWIFT} (\underline{S}emantic \underline{W}indowing and \underline{I}njection for \underline{F}lexible \underline{T}ransitions), a training-free framework for multi-prompt long-video generation that enables efficient semantic switching while preserving temporal coherence in causal video diffusion models. SWIFT introduces a lightweight \emph{Semantic Injection Cache} that augments cached video memory rather than reconstructing it from scratch at every prompt boundary. To avoid uniformly perturbing all attention channels, we further perform head-wise semantic injection, so that each attention head receives a prompt update proportional to its alignment with the current video state. In addition, we introduce an \emph{Adaptive Dynamic Window} that allocates temporal memory according to prompt phase, using larger local context near switching boundaries and smaller windows during stable segments to reduce average inference cost. To preserve long-range semantic consistency under compressed local attention, we further maintain segment-level semantic anchors that summarize prompt-conditioned video history and reintroduce it as compact memory tokens. Compared with current state-of-the-art methods, SWIFT preserves generation quality while achieving 22.6 FPS on a single H100 GPU, establishing a substantially more efficient solution for multi-prompt long-video generation. Our code is available at \url{https://github.com/ShanwenTan/SWIFT}.

\end{abstract}

\vspace{-11pt}
\section{Introduction}
Video generation delivers strong visual realism and temporal stability \cite{qing2024hierarchical, brooks2022generating, yin2025survey, he2022latent}, with broad relevance to filmmaking \cite{yuan2024mora}, digital content creation \cite{ju2025fulldit}, and immersive media \cite{zhou2024upscale}. Autoregressive video diffusion \cite{gu2025long} extends this progress by rolling out frames sequentially for variable-length synthesis~\cite{huang2025self, zhou2024storydiffusion, yin2024one}. Representative efforts include constant memory inference in FIFO-Diffusion \cite{kim2024fifo}, training-free long-horizon extension in LongDiff \cite{li2025longdiff}, and supervised causal generation with autoregressive Transformers \cite{yin2025slow}. Beyond length extension, semantic control throughout generation is equally important, rather than only conditioning at initialization \cite{xie2025comprehensive, feng2025blobgen}.


Multi-prompt video generation is more challenging than single-prompt synthesis, as the model must adapt to prompt updates while ensuring visual coherence, stable transitions, and consistent dynamics. Existing memory-based pipelines struggle with balancing semantic adaptation and temporal preservation \cite{wu2025mind, liu2025worldweaver}. Several recent studies have started to explore multi-prompt long video generation through distinct control and memory paradigms \cite{cui2026selfforcing, wu2025video}. DiTCtrl \cite{cai2025ditctrl} addresses sequential prompt control through attention level modulation, while LongLive \cite{yang2026longlive} resolves prompt switching by recaching memory at semantic boundaries. SynCoS~\cite{kim2025tuning} extends pretrained text-to-video models for multi-event long video generation by synchronizing local smoothness and global coherence. However, limited responsiveness to newly activated prompts or the additional cost of memory reconstruction still constrains their scalability in long-horizon interactive generation \cite{lin2026autoregressive, xi2025sparse}. This exposes a central requirement for multi-prompt long-video generation: \textbf{\textit{an efficient memory mechanism must accommodate continuous prompt shifts while supporting coherent long-range generation.}}

\setlength{\textfloatsep}{3pt}
\begin{figure*}[t]
  \centering
  \includegraphics[width=\textwidth]{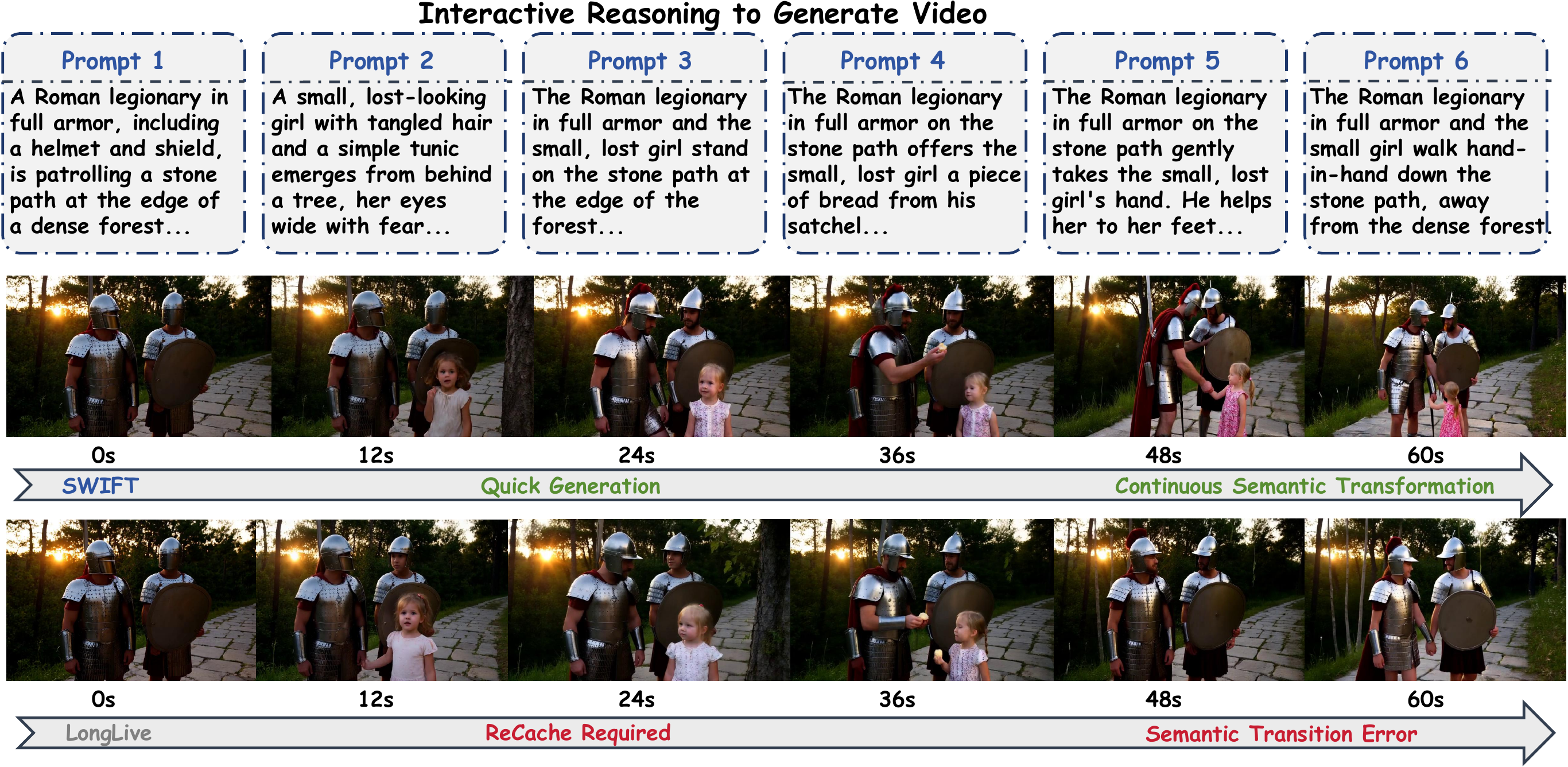}
\caption{
Streaming interactive T2V models, e.g., LongLive~\cite{yang2026longlive}, face unstable prompt transitions and recaching overhead. SWIFT addresses both with Semantic Injection Cache and Adaptive Dynamic Window, improving transition smoothness, long-range consistency, and inference efficiency.
}
  \label{fig:case}
\end{figure*}

Motivated by these challenges, we propose \textbf{SWIFT} 
(\underline{S}emantic \underline{W}indowing and 
\underline{I}njection for \underline{F}lexible 
\underline{T}ransitions), an inference-time memory framework for multi-prompt long-video generation. SWIFT enables efficient adaptation to continuous prompt transitions while preserving coherent long-range visual evolution (Figure \ref{fig:case}). SWIFT integrates two complementary components: (i) \textbf{\textit{Semantic Injection Cache}}, which injects transition-aware semantic signals into memory at prompt boundaries to improve adaptation without full cache reconstruction; and (ii) \textbf{\textit{Adaptive Dynamic Window}}, which allocates the temporal read budget according to the current generation phase to balance long-range coherence and inference efficiency. By combining Semantic Injection Cache with Adaptive Dynamic Window, SWIFT improves prompt responsiveness, visual coherence, and efficiency in multi-prompt long-video generation. The main contributions of this paper can be summarized in three parts.
\begin{itemize}[leftmargin=*, itemsep=4pt, topsep=4pt]
\item \textbf{New Perspective.} Multi-prompt long-video generation is studied from the perspective of continuous memory adaptation. Prompt switching is addressed through efficient historical-memory updating rather than repeated scratch reconstruction.

\item \textbf{Unified Method.} We propose SWIFT, an inference-time memory framework for multi-prompt long-video generation. SWIFT couples transition-aware semantic memory refinement with prompt-phase adaptive temporal budgeting, enabling efficient multi-prompt long-video generation with strong controllability and coherence.

\item \textbf{Empirical Validation.} Extensive experiments are conducted under multi-prompt long-video generation settings. SWIFT achieves strong visual quality and semantic alignment while reaching 22.6 FPS on a single H100 GPU under the multi-prompt setting.
\end{itemize}

\section{Related Works}
\noindent\textbf{Long Video Generation.}
Recent progress in video generation \cite{lee2024grid} has extended synthesis from short clips to longer and more coherent sequences through autoregressive diffusion \cite{yin2025slow, gao2025ca2}, diffusion forcing \cite{song2025history}, and memory-based streaming inference \cite{zhu2025memorize}. Autoregressive and hybrid autoregressive-diffusion methods generate long videos by iteratively rolling out future chunks while reusing historical context \cite{xie2025progressive}, achieving strong temporal continuity and efficient causal generation. Diffusion-forcing methods improve long-horizon rollout by aligning training and inference under partially generated contexts. VideoTetris~\cite{tian2024videotetris} enables compositional text-to-video generation by spatially and temporally composing attention maps to better follow complex textual semantics.

\noindent\textbf{Interactive and Multi-Prompt Video Generation.}
Interactive and multi-prompt video generation \cite{cai2025ditctrl, wu2025mind} requires the model to adapt to changing text conditions while preserving visual continuity across prompt boundaries. Existing approaches commonly decompose the task into multiple short clips \cite{villegas2022phenaki} and then apply stitching, interpolation, or segment-wise generation, which often weakens temporal coherence over long horizons. StoryMem~\cite{zhang2025storymem} further explores multi-shot long video storytelling by maintaining an explicit visual memory bank to improve cross-shot consistency across sequential video segments.

\noindent\textbf{Memory-Efficient Inference for Long-Horizon Generation.}
To improve the efficiency of long-horizon inference, existing methods commonly restrict attention to local windows \cite{xie2025progressive,yang2026longlive} or compact memory states \cite{cai2025mixture} instead of reading the full history densely. These designs provide an effective trade-off between efficiency and long-range context, although most of them rely on fixed memory budgets across different generation stages. SWIFT addresses this challenge with a continuous and dynamic memory mechanism for semantic transition and temporal memory allocation.

\vspace{-6pt}
\section{Method}
\vspace{-4pt}
\label{sec:method}
SWIFT (\underline{S}emantic \underline{W}indowing and \underline{I}njection for 
\underline{F}lexible \underline{T}ransitions) is an inference-time mechanism for efficient semantic transition and memory cache management in interactive long-video generation under sequential prompt control. Built upon causal autoregressive video diffusion, the method combines two coupled designs: a \emph{Semantic Injection Cache} (section \ref{subsec:semantic_injection_cache}) for memory updates at prompt transitions via lightweight semantic bridging rather than expensive cache reconstruction, and an \emph{Adaptive Dynamic Window} (section \ref{subsec:adaptive_dynamic_window}) for phase-aware control of the effective temporal memory span with segment-level semantic anchors augmenting compressed local context. These designs preserve temporal continuity, improve semantic responsiveness at prompt boundaries, and reduce the average attention cost during long-horizon rollout.

\begin{figure*}[t]
  \centering
  \includegraphics[width=\textwidth]{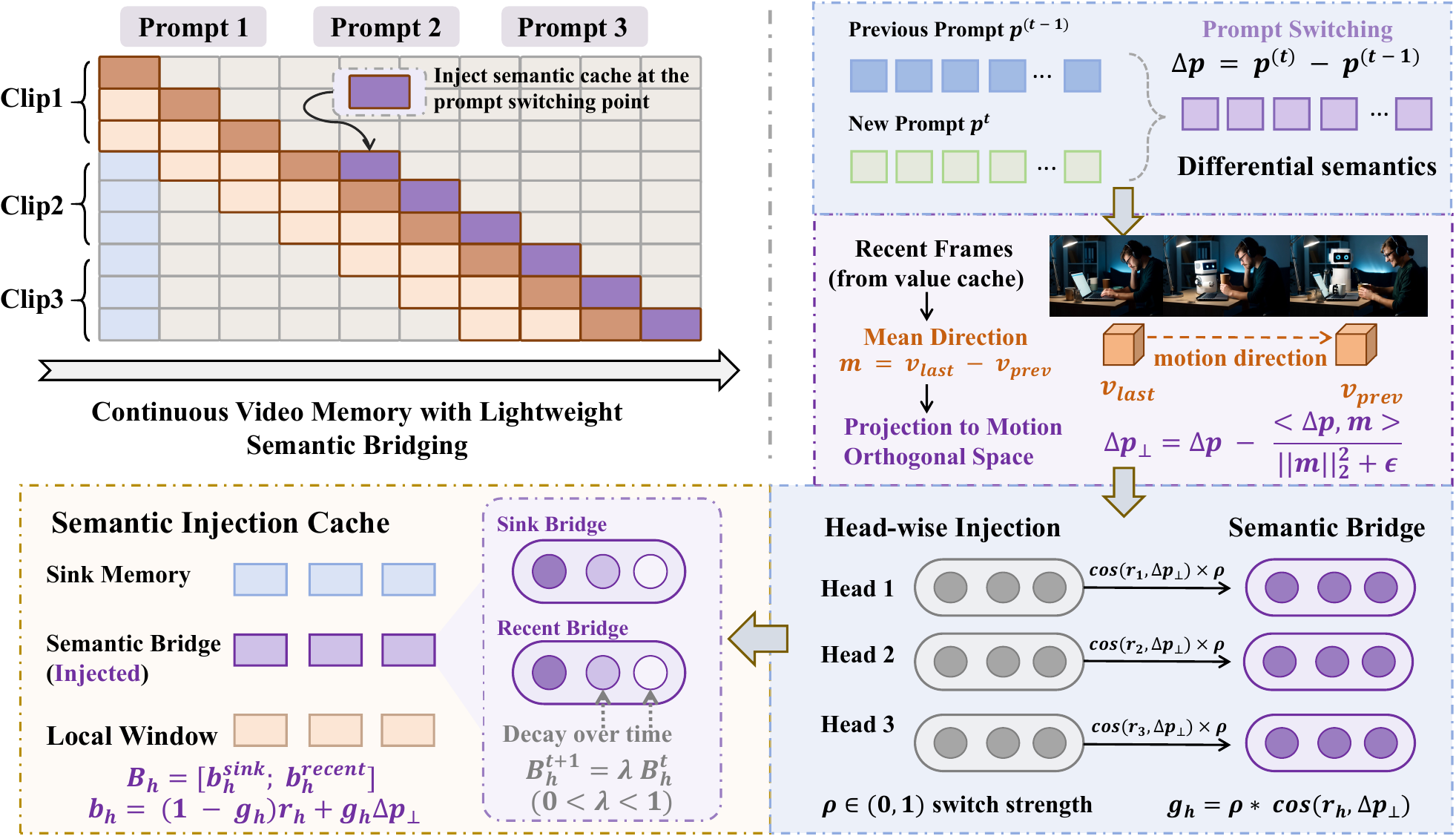}
  \caption{
  Illustration of \textbf{Semantic Injection Cache}. Instead of rebuilding the full video cache at every prompt boundary, SWIFT constructs a lightweight semantic bridge from the prompt transition signal. The transition is first projected onto a motion-orthogonal subspace to avoid interfering with local temporal dynamics, and is then injected into memory through head-wise alignment with recent and sink summaries. The injected bridge is read together with continuous video memory, providing efficient semantic switching while preserving motion continuity.
  }
  \label{fig:semantic_injection_cache}
\end{figure*}

\vspace{-4pt}
\subsection{Semantic Injection Cache}
\label{subsec:semantic_injection_cache}
\vspace{-4pt}

Prompt transitions introduce a mismatch between the newly activated prompt semantics and the historical video memory accumulated under earlier prompts \cite{yu2025context}. Existing methods typically address this issue by recomputing the cache at prompt boundaries \cite{yang2026longlive}, which improves semantic alignment but incurs substantial additional computation. We introduce a Semantic Injection Cache (Figure \ref{fig:semantic_injection_cache}) that updates memory through the detected semantic transition, enabling prompt adaptation without recomputing the full cache while preserving the visual history accumulated in prior memory states.

\textbf{Prompt Transition Signatures. }For each prompt segment, we first compute a projected prompt signature by aggregating the prompt embeddings after projection into the generator hidden space. Let \(p^{(m-1)}\) and \(p^{(m)}\) denote the projected signatures of the previous and current prompts, represented in the same head-wise value-cache space as the cached visual summaries. We define the prompt transition signal as:
\begin{equation}
\Delta p^{(m)} = p^{(m)} - p^{(m-1)}.
\label{eq:prompt_delta_method}
\end{equation}
This difference vector captures the semantic displacement induced by the prompt switch. We further quantify the switch magnitude through the cosine-based transition strength
\begin{equation}
\rho^{(m)} = 1 - \cos\!\left(p^{(m-1)},\, p^{(m)}\right),
\label{eq:switch_strength_method}
\end{equation}
which is later used to modulate the amount of semantic injection.
We decompose $\Delta p^{(m)}$ along the local cache tangent $m$. Under a
first-order cache response model, its orthogonal component is motion-neutral,
yielding the following projection:

\begin{theorem}[\textbf{Motion-neutral semantic projection}]
\label{thm:motion_neutral_projection}
Let $\mathcal{H}$ denote the value-cache representation space. At a prompt
switch, $\Delta p^{(m)}=p^{(m)}-p^{(m-1)}$ represents the prompt displacement,
while $m=v_t-v_{t-1}$ denotes the local cache tangent estimated from recent
value-cache summaries. Under a local first-order cache response assumption,
motion-neutral injection requires removing the component of $\Delta p^{(m)}$
parallel to $m$. The resulting update is the closest prompt-preserving vector in
the motion-neutral subspace:
\begin{equation}
\label{eq:motion_neutral_qp_main}
    \Delta p_{\perp}^{(m)}
    =
    \arg\min_{\Delta x\in\mathcal{H}}
    \|\Delta x-\Delta p^{(m)}\|_2^2
    \quad
    \mathrm{s.t.}\quad
    \langle \Delta x,m\rangle=0 ,
\end{equation}
which admits the closed-form solution
\begin{equation}
\label{eq:motion_orthogonal_projection_exact}
    \Delta p_{\perp}^{(m)}
    =
    \Delta p^{(m)}
    -
    \frac{\left\langle \Delta p^{(m)},m\right\rangle}
    {\|m\|_2^2}m .
\end{equation}
Proof is provided in Appendix~\ref{sec:appendix_theoretical_justification}.
Thus, semantic injection preserves the largest first-order motion-neutral prompt component while avoiding direct perturbation of short-term dynamics.
\end{theorem}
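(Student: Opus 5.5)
The argument has two parts. First I will make precise what ``motion-neutral under a local first-order cache response'' means, so that the constraint $\langle \Delta x, m\rangle = 0$ is justified. Then I will solve the resulting quadratic program in closed form.

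\textbf{Modelling step.} I would model the first-order cache response as follows. The effect of an injected vector $\Delta x\in\mathcal{H}$ on short-term dynamics is captured, to first order, by its component along the local tangent $m=v_t-v_{t-1}$. Concretely, consider the scalar response $r(\Delta x)=\langle \Delta x, m\rangle/\|m\|_2$, which is the linearisation of the change in the motion coordinate of the cache trajectory. Under this model, ``no direct perturbation of short-term dynamics'' is exactly $r(\Delta x)=0$, i.e.\ $\Delta x\in m^\perp$. This step is definitional rather than deductive. The main care needed is to assume $m\neq 0$: if the recent summaries coincide, the constraint is vacuous and one simply takes $\Delta p_\perp^{(m)}=\Delta p^{(m)}$.

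\textbf{Solving the program.} Here I treat $\mathcal{H}$ as a real inner-product space (finite-dimensional in practice). I will give two complementary arguments.

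The first is Lagrangian. Form
\[
L(\Delta x,\lambda)=\|\Delta x-\Delta p^{(m)}\|_2^2+2\lambda\langle \Delta x,m\rangle .
\]
Stationarity gives $\Delta x=\Delta p^{(m)}-\lambda m$. Imposing the constraint yields $\lambda=\langle \Delta p^{(m)},m\rangle/\|m\|_2^2$, which is exactly \eqref{eq:motion_orthogonal_projection_exact}. Since the objective is strictly convex and the feasible set $m^\perp$ is a closed subspace, this stationary point is the unique global minimizer.

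The second argument is a direct check that avoids the multiplier machinery. Write any feasible $\Delta x$ as $\Delta p_\perp^{(m)}+w$ with $w\in m^\perp$. The residual $\Delta p^{(m)}-\Delta p_\perp^{(m)}$ is parallel to $m$, hence orthogonal to $w$. The Pythagorean identity then gives
\[
\|\Delta x-\Delta p^{(m)}\|^2=\|w\|^2+\|\Delta p^{(m)}-\Delta p_\perp^{(m)}\|^2 ,
\]
which is minimized only at $w=0$. This establishes both optimality and uniqueness.

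\textbf{Closing remark.} Finally, I would justify the phrase ``preserves the largest first-order motion-neutral prompt component''. By the orthogonal decomposition $\Delta p^{(m)}=\Delta p_\perp^{(m)}+\Delta p_\parallel^{(m)}$, for every unit vector $u\in m^\perp$ we have $\langle \Delta p^{(m)},u\rangle=\langle\Delta p_\perp^{(m)},u\rangle\le\|\Delta p_\perp^{(m)}\|$. So $\Delta p_\perp^{(m)}$ retains all of the prompt displacement visible within the motion-neutral subspace.

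The main obstacle is not the algebra but the modelling step. The claim that orthogonality to $m$ is \emph{necessary} for motion neutrality holds only within the stated first-order model. I would state explicitly that the theorem is conditional on that linearisation and not attempt to prove it about the actual network.
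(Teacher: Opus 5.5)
Your proposal is correct and matches the paper's proof in substance. The paper likewise treats orthogonality to $m$ as the definition of first-order motion neutrality and obtains the closed form from the decomposition $\Delta p^{(m)} = z_\perp + \alpha m$ (your Lagrangian step is an equivalent way to find $\alpha$); it then proves optimality and uniqueness with the same Pythagorean identity, and justifies the ``largest preserved component'' claim with the same Cauchy--Schwarz bound, stated with an energy budget $\|\Delta x\|_2\le\|z_\perp\|_2$ rather than unit vectors. The paper additionally invokes the Hilbert projection theorem for existence and uniqueness, which your direct Pythagorean verification of the explicit candidate makes unnecessary, so completeness of $\mathcal{H}$ is not actually needed.
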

In practice, we use the stabilized form (where $\epsilon>0$ prevents numerical amplification when the estimated motion magnitude is small).
\begin{equation}
\Delta p_{\perp}^{(m)}
=
\Delta p^{(m)}
-
\frac{\left\langle \Delta p^{(m)},\, m \right\rangle}{\|m\|_2^2 + \epsilon}\, m.
\label{eq:motion_orthogonal_projection_method}
\end{equation}

\textbf{Head-wise Semantic Injection.}
We construct the semantic bridge head-wise to reflect heterogeneous memory
reading patterns across attention heads, from local dynamics to long-range
contextual anchors. For each layer, we extract two summaries from the value cache: a recent summary $r$ from the most recent frames and a sink summary $s$ from the persistent sink region. We measure their alignment with the projected semantic transition by clipped cosine similarity and use the switch strength \(\rho^{(m)}\) to define bounded head-wise gates:
\begin{equation}
g_r = \rho^{(m)} \left[\cos\!\left(r,\, \Delta p_{\perp}^{(m)}\right)\right]_{0}^{1},
\qquad
g_s = \rho^{(m)} \left[\cos\!\left(s,\, \Delta p_{\perp}^{(m)}\right)\right]_{0}^{1}.
\label{eq:headwise_gates_method}
\end{equation}
Here, \([\cdot]_{0}^{1}\) denotes clipping to \([0,1]\), so \(g_r\) and \(g_s\) act as bounded interpolation gates that control how much semantic transition is injected into the recent and sink-aligned summaries. We then construct two bridge components:
\begin{equation}
B_r = (1-g_r)\, r + g_r\, \Delta p_{\perp}^{(m)},
\qquad
B_s = (1-g_s)\, s + g_s\, \Delta p_{\perp}^{(m)}.
\label{eq:bridge_construction_method}
\end{equation}
The two bridge components are concatenated as a transient bridge memory \(B_h=[B_s;B_r]\), which is written into the bridge slots of the key-value attention cache. During self-attention, these bridge cache entries are read together with sink memory, segment anchors, and the adaptive local window, providing prompt-transition guidance without rebuilding the full historical cache. After each generated block, the bridge contribution is decayed by \(B_h^{t+1}=\lambda B_h^t\), allowing newly generated video evidence to gradually dominate once the prompt transition has been absorbed.

\begin{figure*}[t]
  \centering
  \includegraphics[width=\textwidth]{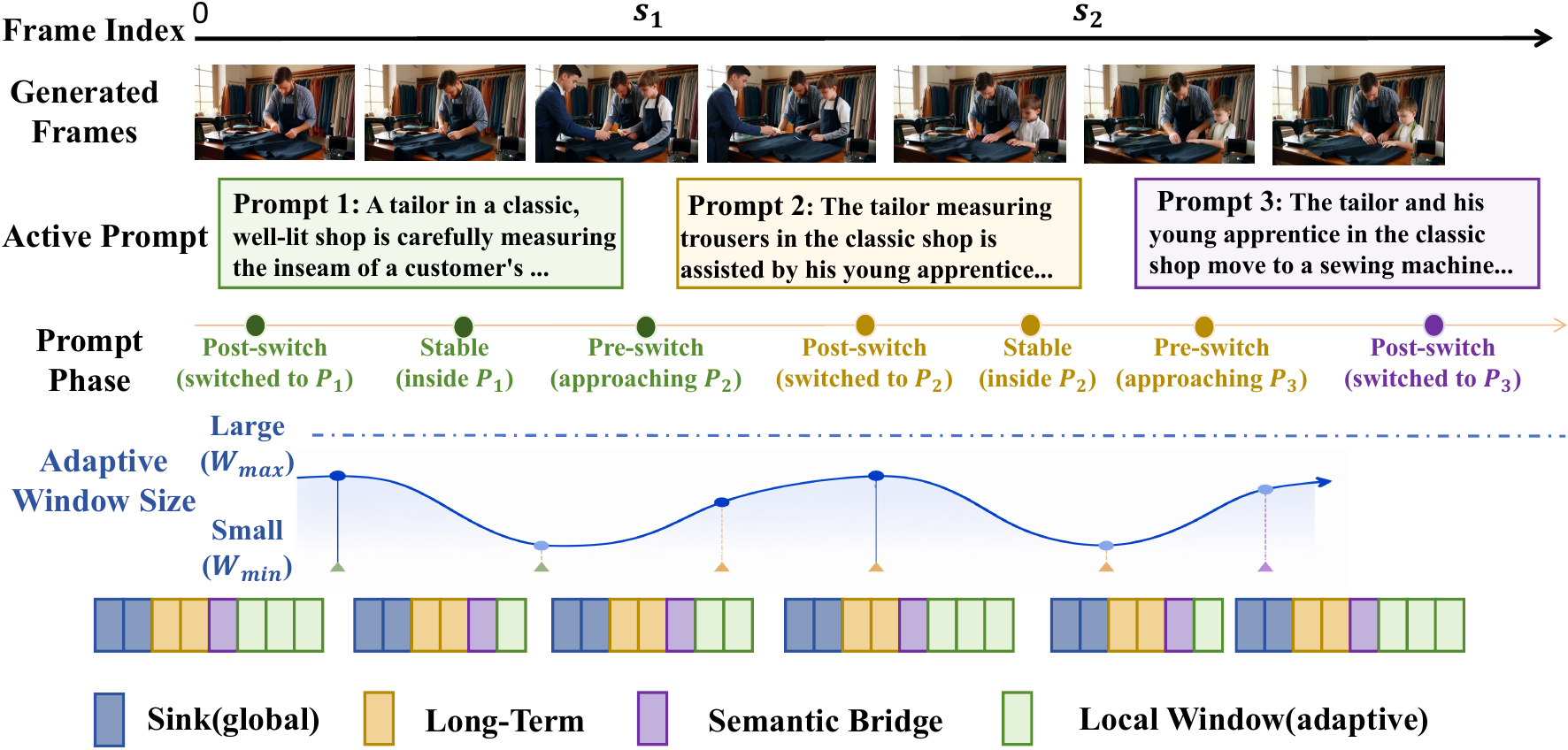}
  \caption{
  Illustration of \textbf{Adaptive Dynamic Window}. SWIFT allocates temporal memory according to prompt phase rather than using a fixed local attention span throughout generation. The effective window expands around prompt transitions for stable semantic handover and shrinks inside stable intervals for efficient rollout. Segment-level semantic anchors compensate for the reduced local context by preserving compact prompt-conditioned summaries of previous segments, thereby lowering average attention cost without sacrificing long-range coherence.
  }
  \label{fig:adaptive_dynamic_window}
\end{figure*}

\subsection{Adaptive Dynamic Window}
\label{subsec:adaptive_dynamic_window}
In long-video generation, the inference cost of causal attention increases with sequence length \cite{wang2025lingen, xi2025sparse}, while recent frames usually contribute more to the next prediction than distant history. Existing local-attention methods therefore use a fixed temporal window for efficiency \cite{zhang2025fast, yang2026longlive}, but different inference phases favor different window sizes. To address this issue, an adaptive dynamic window (Figure \ref{fig:adaptive_dynamic_window}) is introduced to allocate different temporal read budgets across multi-prompt generation phases, improving efficiency in semantically stable intervals and preserving sufficient context around prompt transitions.

\paragraph{Prompt-Phase Window Scheduling.}
Let $t$ denote the current block position, let $s_m$ denote the start of the active prompt segment, and let $s_{m+1}$ denote the next switch boundary. We define the segment age
\begin{equation}
a_t = \max(0,\, t - s_m),
\end{equation}
and the distance to the next switch
\begin{equation}
d_t =
\begin{cases}
\max(0,\, s_{m+1} - t), & \text{if the next switch exists},\\[3pt]
+\infty, & \text{otherwise}.
\end{cases}
\end{equation}
The prompt phase of the current generation step is determined by $a_t$ and $d_t$. Early after a prompt switch, the model requires a larger local window to maintain stable semantic transition. In the stable portion of a prompt segment, the local context demand becomes weaker and the window can be reduced. As generation approaches the next switch, the window expands again to prepare for the upcoming semantic transition.
To realize this phase-dependent scheduling, we define a post-switch decay factor and a pre-switch expansion factor
\begin{equation}
w_{\text{post}}(t)
=
\exp\!\left(-\frac{a_t}{\tau_{\text{post}}}\right), \quad
w_{\text{pre}}(t)
=
\exp\!\left(-\frac{d_t}{\tau_{\text{pre}}}\right).
\end{equation}
The effective phase weight is
\begin{equation}
w_t = \max\!\left\{ w_{\text{post}}(t),\; w_{\text{pre}}(t) \right\}.
\label{eq:phase_weight_method}
\end{equation}
Given a maximum window size $W_{\max}$ and a minimum window size $W_{\min}$, the adaptive window is defined as
\begin{equation}
W_t
=
W_{\min}
+
\left(W_{\max} - W_{\min}\right) w_t.
\label{eq:adaptive_window_method}
\end{equation}
The physical cache budget remains fixed, and the adaptive mechanism only modulates the effective temporal read range at each step. This design avoids cache reallocation overhead while reducing the average attention cost over long-horizon generation.

\paragraph{Segment-Level Semantic Anchors.}
A small local window improves efficiency, but it also reduces direct access to long-range semantic context \cite{zhang2025fast}. To compensate for this loss, we introduce segment-level semantic anchors. At the end of each prompt segment, we summarize the recent value cache and combine this visual summary with the projected prompt signature of the same segment. Let $u^{(m)}$ denote the summary of the recent value cache from the completed segment, and let $p^{(m)}$ denote the semantic prompt signature obtained by projecting the prompt into the generator hidden space. We define the segment anchor as
\begin{equation}
A^{(m)} = (1-\alpha)\, u^{(m)} + \alpha\, p^{(m)},
\label{eq:segment_anchor_method}
\end{equation}
where $\alpha$ controls the balance between visual memory and semantic identity.
This anchor provides a compact prompt-conditioned summary of the segment and is injected into attention together with sink memory, semantic bridge entries, and the current local window. Long-range semantic context is therefore preserved without incurring the full cost of dense historical attention.

\textbf{Unified Memory View. }Under the proposed design, the memory used by the generator is no longer a single fixed-size local cache. Instead, it becomes a structured memory system composed of continuous video history, transient semantic bridges, compact segment anchors, and a prompt-phase adaptive local window. The semantic injection cache governs how the model changes its interpretation of history at prompt transitions. The adaptive dynamic window governs where computation is spent over time. The segment anchors preserve long-range semantic context when the local window is compressed. Together, these components enable efficient and coherent multi-prompt long-video generation without relying on repeated full memory reconstruction. The detailed procedures for semantic injection and adaptive window scheduling are summarized in Algorithm~\ref{alg:semantic_injection_cache} and Algorithm~\ref{alg:adaptive_dynamic_window}.


\vspace{-4pt}
\section{Experiments}
\vspace{-3pt}
\subsection{Experimental Setup}
\textbf{Implementation Details.}
SWIFT is instantiated on top of Wan2.1-T2V-1.3B \cite{wan2025wan}, inherits the inference protocol of LongLive \cite{yang2026longlive}, and extends the backbone with our semantic injection and adaptive memory design. SWIFT is compatible with autoregressive video diffusion models that expose reusable attention memory value cache and support blockwise causal rollout. 

\textbf{Evaluation protocol.} Following the same setting as LongLive \cite{yang2026longlive} and Self-Forcing \cite{huang2025self}, we use Qwen2-72B-Instruct~\cite{yang2025qwen3} to generate the switched-prompt dataset. Each script set contains 6 consecutive 10-second cue sequences, for a total of 100 videos. Each video lasts 60 seconds.
We use VBench-Long \cite{huang2024vbench} to evaluate the visual quality of all generated videos, focusing on image quality, subject consistency and aesthetics, to compare long-range consistency and visual effects in terms of subject, background and visual aesthetics. For semantic alignment evaluation under sequential prompt control, each generated video is partitioned into 10-second segments according to the prompt schedule, and the CLIP \cite{radford2021learning} score is computed between each segment and its corresponding prompt.

\textbf{Baselines.} We compare SWIFT with several representative state-of-the-art long video generation methods under prompt switching at inference time. 
The compared methods include MemFlow~\cite{ji2025memflow}, FramePack~\cite{zhang2025packing}, Self Forcing~\cite{huang2025self}, CausVid~\cite{yin2025slow}, and LongLive~\cite{yang2026longlive}. 
Each script contains 6 consecutive 10-second prompts, yielding 100 generated videos totaling 60 seconds each. Appendix \ref{deatailed_setup} provides detailed information on SWIFT settings.

\begin{table}[t]
\centering
\caption{\textbf{CLIP score and FPS comparison under the multi-prompt 60-second setting.} CLIP scores are computed at prompt-aligned intervals.}
\label{tab:multi_prompt_60s}
\definecolor{lightgrayrow}{gray}{0.93}
\setlength{\tabcolsep}{2pt}
\renewcommand{\arraystretch}{1.05}
\resizebox{\columnwidth}{!}{%
\begin{tabular}{lcccccccc}
\toprule
\multirow{2}{*}{\textbf{Method}} 
& \multirow{2}{*}{\#Params}
& \multirow{2}{*}{\makecell{Throughput\\(FPS)$\uparrow$}}
& \multicolumn{6}{c}{CLIP Score$\uparrow$} \\
\cmidrule(lr){4-9}
& & & 0--10\,s & 10--20\,s & 20--30\,s & 30--40\,s & 40--50\,s & 50--60\,s \\
\midrule
MemFlow~\cite{ji2025memflow}    
& 1.3B & 18.7  
& 26.29 & 24.64 & 23.85 & 23.53 & 24.27 & \underline{23.73} \\
Self Forcing~\cite{huang2025self}  
& 1.3B & 17.0  
& 26.24 & 24.87 & 23.46 & 21.92 & 22.05 & 21.07 \\
LongLive~\cite{yang2026longlive}         
& 1.3B & \underline{20.7}  
& \textbf{26.63} & \underline{25.47} & \underline{24.65} & \underline{23.69} & \underline{24.52} & {23.61} \\
CausVid~\cite{yin2025slow}
& 1.3B & 17.0
& 26.55 & 24.93 & 23.82 & 22.74 & 23.16 & 22.68 \\
FramePack~\cite{zhang2025packing}       
& 13B & 6.7   
& 26.51 & 22.60 & 22.18 & 21.53 & 21.98 & 21.62 \\
\rowcolor{lightgrayrow}
\textbf{SWIFT}                
& 1.3B & \textbf{22.6} 
& \underline{26.53} & \textbf{25.86} & \textbf{24.71} & \textbf{24.04} & \textbf{24.68} & \textbf{24.13} \\
\bottomrule
\end{tabular}%
}
\end{table}
\begin{table}[t]
\centering
\caption{\textbf{Quantitative comparison under multi-prompt 60-second setting} with representative long video generation models.}
\label{tab:multi_prompt_quality}
\definecolor{lightgrayrow}{gray}{0.93}
\setlength{\tabcolsep}{2pt}
\renewcommand{\arraystretch}{1.05}
\resizebox{\columnwidth}{!}{%
\begin{tabular}{lccccccc}
\toprule
\textbf{Method}
& {Resolution}
& \makecell{Aesthetic\\Quality$\uparrow$}
& \makecell{Background\\Consistency$\uparrow$}
& \makecell{Imaging\\Quality$\uparrow$}
& \makecell{Motion\\Smoothness$\uparrow$}
& \makecell{Subject\\Consistency$\uparrow$}
& \makecell{Temporal\\Flickering$\uparrow$} \\
\midrule
MemFlow~\cite{ji2025memflow}    
& 832$\times$480 & \underline{60.02} & 96.21 & 71.69 & 98.83 & \underline{97.63} & 97.67 \\
Self Forcing~\cite{huang2025self}  
& 832$\times$480 & 58.45 & 95.74 & 71.33 & 98.56 & 95.35 & 97.52 \\
LongLive~\cite{yang2026longlive}         
& 832$\times$480 & {59.89} & 96.05 & \underline{71.74} & \underline{99.14} & 97.32 & \textbf{98.46} \\
CausVid~\cite{yin2025slow}
& 832$\times$480 & 59.77 & \underline{95.42} & 71.73 & 99.02 & {96.15} & 98.02 \\
FramePack~\cite{zhang2025packing}       
& 832$\times$480 & 59.68 & \textbf{96.77} & 71.71 & 98.87 & \textbf{97.86} & 97.41 \\
\rowcolor{lightgrayrow}
\textbf{SWIFT}                
& 832$\times$480 & \textbf{60.23} & \underline{96.28} & \textbf{72.32} & \textbf{99.19} & 97.34 & \underline{98.35} \\
\bottomrule
\end{tabular}%
}
\end{table}

\subsection{Multi-prompt Generation Results}
We first compare SWIFT with representative existing methods under the multi-prompt setting to evaluate its performance on long-horizon interactive video generation.

\textbf{Text-video alignment and efficiency.}
Table~\ref{tab:multi_prompt_60s} further reports clip-wise CLIP scores and inference throughput. SWIFT achieves the highest CLIP scores in five out of six prompt intervals and remains very close to the best result in the first interval, showing consistently stronger semantic alignment immediately after prompt transitions. The semantic injection cache enables SWIFT to reliably sustain strong prompt alignment throughout successive prompt changes by continuously correcting residual semantics in historical memory. At the same time, SWIFT reaches the highest throughput of 22.6 FPS, outperforming LongLive at 20.7 FPS and substantially exceeding the other baselines. 

\textbf{Video quality.}
Visual quality is comprehensively evaluated using multiple metrics from VBench~\cite{huang2024vbench}, including aesthetic quality, background consistency, imaging quality, motion smoothness, subject consistency, and temporal flickering. These metrics provide a broad assessment of visual fidelity, long range coherence, and temporal stability in generated videos.
As shown in Table~\ref{tab:multi_prompt_quality}, SWIFT delivers strong overall visual quality under the multi-prompt setting. It achieves the best results in aesthetic quality, imaging quality, and motion smoothness, while remaining competitive in background consistency, subject consistency, and temporal flickering. These results indicate that SWIFT improves visual fidelity and temporal stability without compromising long-range consistency. We attribute this improvement to the proposed memory design, which facilitates semantic transitions while retaining useful historical context. The results demonstrate that SWIFT improves prompt-following ability under sequential prompt control while also providing higher inference efficiency.

\begin{table}[t]
\centering
\small
\caption{\textbf{Ablation study under the multi-prompt 60-second setting}. The CLIP score is computed at intervals aligned with the prompt switching.}
\label{tab:ablation_multi_prompt_60s}
\setlength{\tabcolsep}{2pt}
\renewcommand{\arraystretch}{1.05}
\definecolor{lightgrayrow}{gray}{0.93}
\resizebox{\columnwidth}{!}{%
\begin{tabular}{lccccccccc}
\toprule
\multirow{2}{*}{\textbf{Variant}} 
& \multirow{2}{*}{\makecell{Quality\\Score$\uparrow$}}
& \multirow{2}{*}{\makecell{Consistency\\Score$\uparrow$}}
& \multirow{2}{*}{\makecell{Throughput\\(FPS)$\uparrow$}}
& \multicolumn{6}{c}{CLIP Score$\uparrow$} \\
\cmidrule(lr){5-10}
& & & & 0--10\,s & 10--20\,s & 20--30\,s & 30--40\,s & 40--50\,s & 50--60\,s \\
\midrule
No-Memory         & 80.94 & 90.87 & \textbf{25.1} & 24.83 & 22.74 & 21.68 & 21.35 & 21.12 & 20.94 \\
Sink              & 82.19 & 92.52 & \underline{24.5} & 25.58 & 23.40 & 22.45 & 22.67 & 22.59 & 22.82 \\
Sink+SIC          & \underline{84.26} & \underline{95.74} & 20.5 & 26.11 & \underline{25.12} & \underline{23.98} & \underline{23.77} & \underline{23.95} & \underline{23.84} \\
Fixed             & 83.71 & 94.96 & 20.9 & \textbf{26.67} & 24.96 & 23.81 & 23.58 & 23.64 & 23.42 \\
\rowcolor{lightgrayrow}
\textbf{SWIFT} & \textbf{84.97} & \textbf{96.62} & 22.6 & \underline{26.53} & \textbf{25.86} & \textbf{24.71} & \textbf{24.04} & \textbf{24.68} & \textbf{24.13}  \\
\bottomrule
\end{tabular}%
}
\end{table}

\vspace{-6pt}
\setlength{\textfloatsep}{5pt}
\begin{figure*}[t]
  \centering
  \includegraphics[width=\textwidth]{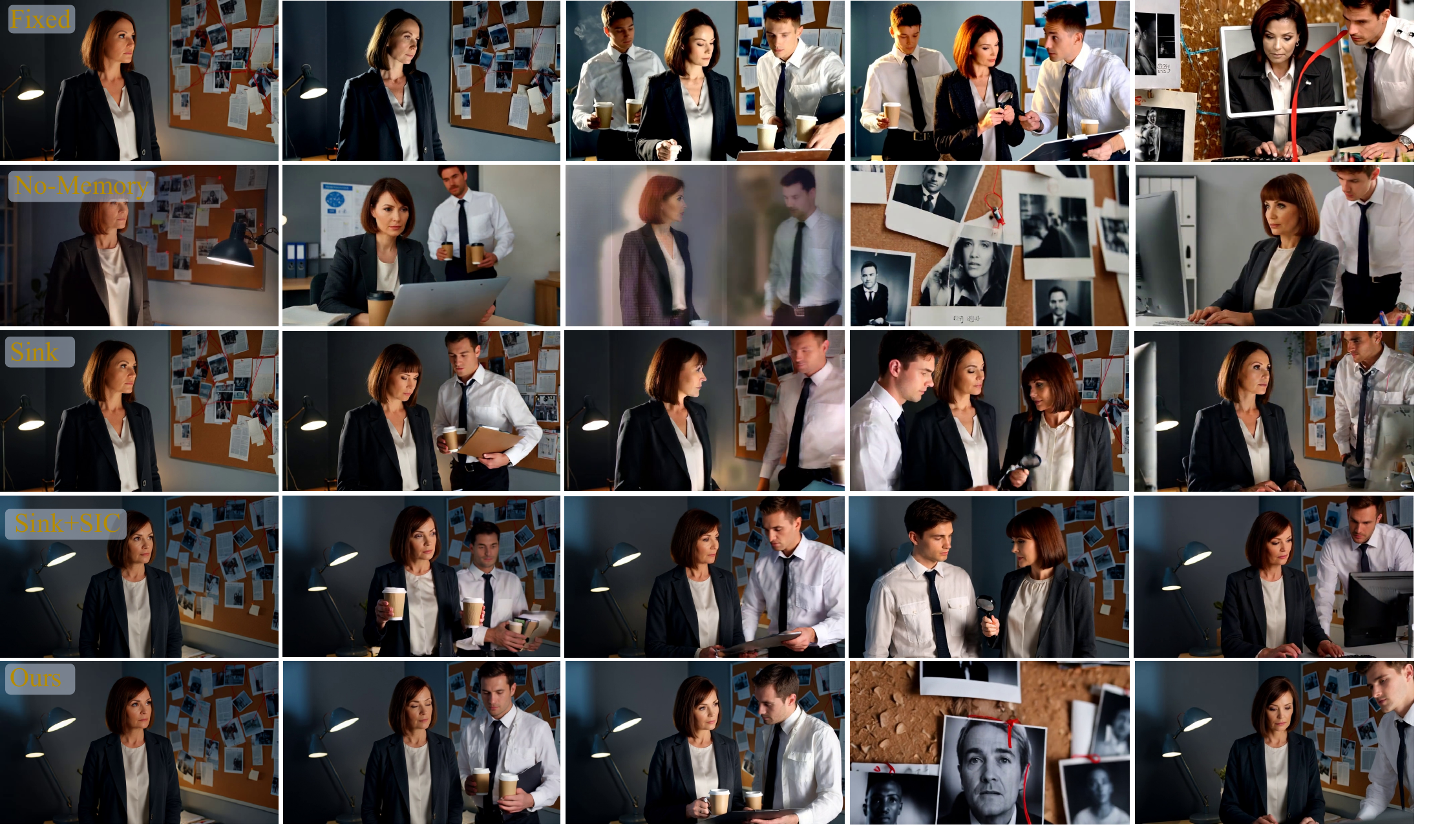}
  \caption{\textbf{Qualitative comparison of different memory variants under the multi-prompt 60-second setting.} \textit{Fixed} uses a constant local window. \textit{No-Memory} removes additional transition-aware memory. \textit{Sink} retains only sink memory. \textit{Sink+SIC} adds the Semantic Injection Cache on top of sink memory. \textit{Ours} denotes the full SWIFT model, which achieves more coherent prompt transitions and better long-range visual consistency.}
  \label{fig:ablation}
\end{figure*}

\vspace{-6pt}
\subsection{Ablation Studies}
\vspace{-4pt}
We conduct ablation studies on the two core components of the framework, namely the Semantic Injection Cache and the Adaptive Dynamic Window, under the 60-second interactive multi-prompt video generation setting with six consecutive prompts.

\textbf{Memory Variants.} Specifically, variants with different memory mechanisms are constructed based on the design in Section~\ref{subsec:semantic_injection_cache}. \textit{No-Memory} removes the proposed memory augmentation and performs generation without additional transition-aware memory. \textit{Sink} retains only the sink memory for long-range context preservation. \textit{Sink+SIC} further introduces the Semantic Injection Cache on top of sink memory to support semantic adaptation across prompt transitions.

\textbf{Window Variants.} Following the design in Section~\ref{subsec:adaptive_dynamic_window}, we further compare different cache capacity strategies. Here, \textit{Fixed} uses a constant static local window with size 12 throughout the entire generation process, without adaptive adjustment across prompt phases.
Empirical results across all variants collectively underscore the necessity of our integrated memory design in effectively mitigating semantic drift while preserving long-range structural integrity.

Table~\ref{tab:ablation_multi_prompt_60s} reports inference throughput together with clip-wise CLIP scores over all six prompt intervals. The full SWIFT achieves the strongest overall semantic alignment performance, which further verifies the effectiveness of combining Semantic Injection Cache with adaptive temporal memory allocation for sustained prompt following under long-horizon multi-prompt generation. High throughput in \textit{No-Memory} and \textit{Sink} is offset by significant degradation in video quality and consistency metrics. Such performance gaps underscore the insufficiency of raw efficiency for sustaining coherent multi-prompt generation over long horizons. Meanwhile, \textit{Sink+SIC} yields performance competitive with SWIFT in visual metrics but exhibits a trade-off through noticeably lower inference throughput. Compared with \textit{Fixed}, SWIFT achieves higher throughput and better overall generation quality, showing the advantage of adaptive temporal memory allocation over a constant window strategy. 

As shown in Figure~\ref{fig:ablation}, across the visualized frame sequences, the full SWIFT produces more coherent semantic transitions and more stable long-range visual continuation than the ablated variants. This highlights the importance of coupling semantic memory refinement with phase-aware temporal allocation in multi-prompt long-video generation.
\begin{figure*}[t]
  \centering
  \includegraphics[width=\textwidth,height=0.6\textheight,keepaspectratio]{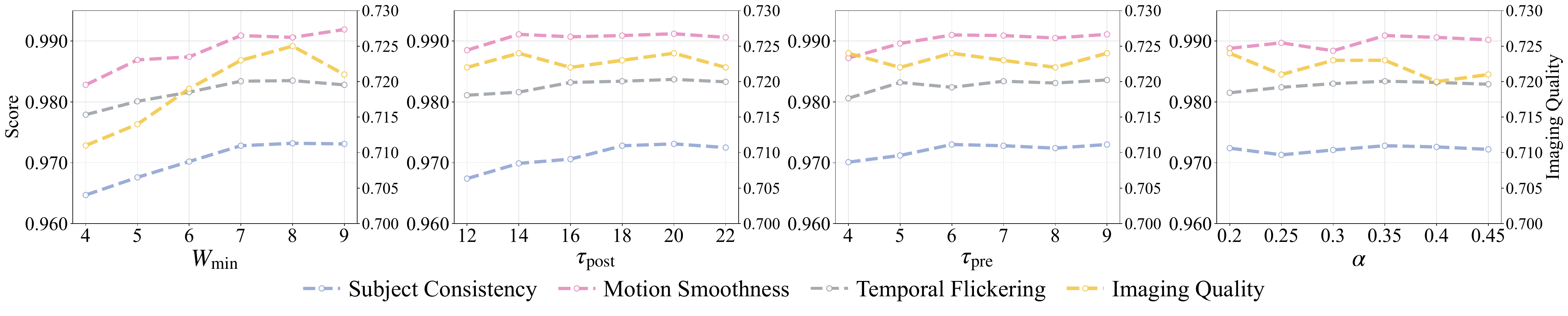}
  \caption{\textbf{Sensitivity analysis of the adaptive dynamic window.} The figure reports the variation of representative visual quality metrics under different hyperparameter settings of the adaptive window, including the minimum window size and the phase scheduling factors.}
  \label{fig:ses}
\end{figure*}

\subsection{Sensitivity Analysis}
Long-horizon video generation requires the model to balance temporal coherence and computational efficiency under constrained attention budgets \cite{li2025longdiff, li2026long}. Sensitivity analyses are conducted on key hyperparameters of both the Semantic Injection Cache and the Adaptive Dynamic Window, described in Section~\ref{subsec:semantic_injection_cache} and Section~\ref{subsec:adaptive_dynamic_window}, respectively. All other hyperparameters remain the same as the default settings. Figure~\ref{fig:ses} illustrates the sensitivity of representative visual quality metrics to adaptive window hyperparameter settings, including the minimum window size and the phase scheduling factors.

\textbf{Minimum window size $W_{\min}$.} The minimum window size $W_{\min}$ determines the lower bound of the temporal read budget in stable prompt phases. Its value affects the trade-off between inference efficiency and local temporal context preservation. A very small $W_{\min}$ leads to clear degradation in generation quality, while an excessively large $W_{\min}$ reduces the efficiency gain of adaptive memory allocation and yields limited additional benefit.

\textbf{Phase scheduling factors $\tau_{\text{post}}$ and $\tau_{\text{pre}}$.} The parameters $\tau_{\text{post}}$ and $\tau_{\text{pre}}$ control how fast the temporal window shrinks after a prompt switch and how early it expands before the next boundary. Across a broad range of $\tau_{\text{post}}$ and $\tau_{\text{pre}}$ values, the visual quality metrics remain largely stable, indicating that SWIFT is robust to moderate changes in phase scheduling. The result indicates that effective multi-prompt generation can be maintained without highly sensitive hyperparameter tuning.

\textbf{Semantic mixing coefficient $\alpha$.} The coefficient $\alpha$ controls the balance between visual memory and semantic identity in the segment-level semantic anchor. Across different $\alpha$ values, the representative quality metrics remain consistently stable. The stable trend across different $\alpha$ values shows that the segment anchor can maintain an effective balance between semantic identity and visual memory over a broad mixing range. These trends motivate the default setting by favoring stable quality under a reduced average memory budget.

\vspace{-6pt}
\subsection{Additional Experiments}
Appendix~\ref{inject_patterns} further analyzes the temporal injection pattern of Semantic Injection Cache by comparing one-shot injection at prompt boundaries, continuously decayed injection, and constant injection throughout generation. Per-block efficiency traces are provided in Appendix~\ref{subsec:block_efficiency_trace} to examine latency, memory read budget, and GPU memory usage across prompt-switch boundaries and stable generation segments. We further discuss multi-prompt video generation under different video lengths in Appendix~\ref{subsec:different_video_lengths} to evaluate the scalability of SWIFT across varying temporal horizons.

\vspace{-4pt}
\section{Conclusion}
In this work, we introduce SWIFT, a training-free memory framework for efficient multi prompt long video generation. SWIFT enables prompt transitions through lightweight semantic injection instead of repeated cache reconstruction. Semantic Injection Cache updates historical video memory with transition-aware prompt signals and preserves motion continuity and scene structure. Adaptive Dynamic Window allocates temporal memory according to prompt phase, using larger context around prompt switches and compact context during stable generation. Segment-level semantic anchors preserve long range prompt conditioned history under reduced local attention. The proposed design avoids expensive full memory rebuilding and provides a more practical path toward scalable interactive long video generation. Extensive experiments show that SWIFT improves prompt responsiveness, temporal coherence, and visual quality under sequential prompt control, and achieves 22.6 FPS on a single H100 GPU in the multi-prompt setting.

\bibliography{neurips_2026}
\bibliographystyle{abbrv}


\clearpage
\appendix
\section{Use of Large Language Models}

Large language models were used only as auxiliary writing tools during the preparation of this manuscript. Their use was limited to improving grammar, readability, and stylistic consistency of author-written text. They were not used to formulate the research idea, design the method, conduct experiments, analyze results, or derive conclusions. All technical content, experimental settings, reported results, and interpretations were produced and checked by the authors.  The authors remain fully responsible for the correctness, integrity, and final content of the manuscript.

\section{Broader Impacts}

SWIFT improves efficient and controllable long-video generation. This capability can support film production, digital content creation, education, and interactive media. The training-free design also reduces extra training cost and lowers the barrier for adapting long-video generators to sequential user prompts.

The same capability may also increase risks from synthetic media. Possible misuse includes misleading video generation, impersonation, and harmful content creation. These risks are shared by many generative video systems. Responsible deployment should include watermarking, provenance tracking, content moderation, and user-facing disclosure. SWIFT is an inference-time method and does not introduce new training data or personal data collection. All text prompts used to generate reasoning were generated by Qwen2-72B-Instruct \cite{yang2025qwen3}, and are harmless, safe, and intended for academic research purposes only.

\section{Detailed Experimental Setup}
\label{deatailed_setup}

\subsection{Compute Resources}
SWIFT is an inference-time framework and requires no additional training for deployment on autoregressive diffusion video generation models. All experiments in this paper are conducted on a single H100 GPU with 80GB memory for inference, which does not impose a high computational requirement in the context of long-video generation.

\subsection{SWIFT Settings}

\textbf{Autoregressive generation.}
All experiments follow the autoregressive inference protocol of LongLive with
Wan2.1-T2V-1.3B as the backbone. Each video is generated block by block with
\(B=3\) frames per block and \(T=240\) output frames in total. The multi-prompt
schedule contains six prompt segments, with switching boundaries at
\(\mathcal{S}=\{40,80,120,160,200\}\). 

\textbf{Diffusion inference.}
The denoising process uses four inference steps,
\(\mathcal{D}=\{1000,750,500,250\}\), with warped denoising enabled. The
timestep shift of the backbone diffusion model is fixed to
\(\lambda_{\mathrm{ts}}=5.0\) for all experiments.

\textbf{Memory configuration.}
The persistent sink size is set to \(N_{\mathrm{sink}}=3\), and the maximum
local attention window is set to \(W_{\max}=12\), corresponding to the
\texttt{local\_attn\_size} used in the implementation. For the adaptive dynamic
window, we use \(W_{\min}=7\), \(\tau_{\mathrm{post}}=18\), and
\(\tau_{\mathrm{pre}}=9\), while the fixed-window ablation uses \(W=12\)
throughout generation. Segment-level semantic anchors are enabled by default:
the visual summary for each completed segment is computed from the most recent
\(R_{\mathrm{anchor}}=6\) frames, and at most \(M_{\mathrm{anchor}}=4\)
historical anchors are retained. The semantic mixing coefficient is
set to \(\alpha=0.35\), and the anchor injection scale is set to
\(\gamma_{\mathrm{anchor}}=0.8\). At each switching boundary
\(s_m \in \mathcal{S}\), the Semantic Injection Cache constructs transient
bridge tokens from \(\Delta p^{(m)}_{\perp}\), which are inserted into memory
and decayed during subsequent generation steps. The semantic bridge decay factor is set to \(\lambda=0.85\)


\subsection{Baselines}
We compare SWIFT with several state-of-the-art inference-time methods for interactive long-video generation. For a fair comparison, all baseline hyperparameters are kept consistent with the default configurations reported in their original papers.

\textbf{MemFlow}~\cite{ji2025memflow} is a dynamic memory-retrieval framework for efficient long-context video generation. It retrieves prompt-relevant historical frames to update the memory bank before each generation chunk, and activates only the most relevant memory tokens during attention, thereby enabling consistent long-video generation with limited computational overhead.

\textbf{FramePack} \cite{zhang2025packing} is a context-packing framework designed for efficient long video generation that optimizes computational throughput and memory usage by packing multiple video frames into a unified context representation.

\textbf{CausVid}~\cite{yin2025slow} is a causal autoregressive video diffusion framework for fast streaming generation. It converts a pretrained bidirectional diffusion transformer into an autoregressive generator and applies video distribution matching distillation to obtain a few-step causal student.

\textbf{Self-Forcing} \cite{huang2025self} is an autoregressive video diffusion training paradigm that mitigates the exposure bias (train-test gap) by training the model on its own generated outputs using key-value (KV) caching and holistic video-level supervision.

\textbf{LongLive} \cite{yang2026longlive} is a real-time interactive video generation framework that introduces a \textit{KV-recache} mechanism to recalculate cached states during prompt transitions, ensuring semantic adherence to new instructions while preserving temporal motion continuity.

\begin{figure*}[t]
\vspace{-6pt}
\centering

\begin{minipage}[t]{0.485\textwidth}
\begin{algorithm}[H]
\caption{Semantic Injection Cache}
\label{alg:semantic_injection_cache}
\Input{Previous prompt signature $p^{(m-1)}$, current prompt signature $p^{(m)}$; value cache of the current layer; recent summary $r$; sink summary $s$}
\Output{Transient semantic bridge entries $B_s$ and $B_r$}
$\Delta p^{(m)} \leftarrow$ Eq.~\eqref{eq:prompt_delta_method}\;
$\rho^{(m)} \leftarrow$ Eq.~\eqref{eq:switch_strength_method}\;
\For{each layer}{
    Estimate local motion direction $m$ from value cache\;
    $\Delta p_{\perp}^{(m)} \leftarrow$ Eq.~\eqref{eq:motion_orthogonal_projection_method}\;
    Extract recent summary $r$ and sink summary $s$\;
    \For{each head}{
        $(g_r,g_s) \leftarrow$ Eq.~\eqref{eq:headwise_gates_method}\;
        $(B_r,B_s) \leftarrow$ Eq.~\eqref{eq:bridge_construction_method}\;
    }
    Append $B_s$ and $B_r$ as transient semantic entries\;
}
Read from sink memory, segment anchors, bridge entries, and local window\;
\KwRet $B_s, B_r$\;
\end{algorithm}
\end{minipage}
\hfill
\begin{minipage}[t]{0.485\textwidth}
\begin{algorithm}[H]
\caption{Adaptive Dynamic Window}
\label{alg:adaptive_dynamic_window}
\Input{Current block position $t$; active segment start $s_m$; next switch boundary $s_{m+1}$; window limits $W_{\min},W_{\max}$; recent value summary $u^{(m)}$; prompt signature $p^{(m)}$}
\Output{Adaptive window size $W_t$ and segment anchor $A^{(m)}$}
Determine active prompt segment for $t$\;
$a_t \leftarrow \max(0,t-s_m)$\;
Compute distance $d_t$ to next switch boundary $s_{m+1}$\;
Compute $w_{\text{post}}(t)$ and $w_{\text{pre}}(t)$\;
$w_t \leftarrow$ Eq.~\eqref{eq:phase_weight_method}\;
$W_t \leftarrow$ Eq.~\eqref{eq:adaptive_window_method}\;
Select effective local window according to $W_t$\;
\If{current prompt segment ends}{
    Summarize recent value cache as $u^{(m)}$\;
    $A^{(m)} \leftarrow$ Eq.~\eqref{eq:segment_anchor_method}\;
    Append $A^{(m)}$ into structured memory\;
}
Read from sink memory, bridge entries, anchors, and adaptive local window\;
\KwRet $W_t, A^{(m)}$\;
\end{algorithm}
\end{minipage}

\vspace{-6pt}
\end{figure*}

\section{Proof of Theorem~\ref{thm:motion_neutral_projection}}
\label{sec:appendix_theoretical_justification}

This appendix provides the full proof of
Theorem~\ref{thm:motion_neutral_projection}. We first formalize the local
cache response model used to define motion-neutral semantic injection, and then
derive the closed-form projection as the locally optimal prompt-preserving
update.

\subsection{Setup and Assumptions}

Let $\mathcal{H}$ be the Hilbert space of value-cache representations, equipped
with inner product $\langle\cdot,\cdot\rangle$ and induced norm $\|\cdot\|_2$.
During autoregressive generation, the cached video representation evolves as
\begin{equation}
    v_1,v_2,\ldots,v_t\in\mathcal{H}.
\end{equation}
At a prompt switch from segment $m-1$ to segment $m$, the projected prompt
displacement is
\begin{equation}
    \Delta p^{(m)} = p^{(m)}-p^{(m-1)} \in \mathcal{H}.
\end{equation}
The local cache tangent is estimated from consecutive value-cache summaries:
\begin{equation}
    m = v_t-v_{t-1}, \qquad m\neq 0 .
\end{equation}

\paragraph{Local cache tangent.}
We assume the cache trajectory is locally first-order smooth around the current
state. For a small temporal step $\Delta t$, this gives
\begin{equation}
    v_t
    =
    v_{t-1}
    +
    \dot v_{t-1}\Delta t
    +
    O(\Delta t^2).
\end{equation}
Thus, $m=v_t-v_{t-1}$ provides a finite-difference estimate of the dominant
local tangent direction of short-term cache dynamics. We denote the corresponding
motion tangent subspace by
\begin{equation}
    \mathcal{T}_t = \operatorname{span}\{m\},
\end{equation}
and its orthogonal complement by
\begin{equation}
    \mathcal{N}_t = \mathcal{T}_t^{\perp}
    =
    \{z\in\mathcal{H}:\langle z,m\rangle=0\}.
\end{equation}

\paragraph{First-order motion response.}
For a small semantic injection $v_t\mapsto v_t+\eta\Delta x$, its tangential
effect on the local cache trajectory is measured by the projection onto
$\mathcal{T}_t$. The orthogonal projection onto $\mathcal{T}_t$ is
\begin{equation}
    \Pi_{\mathcal{T}_t}(z)
    =
    \frac{\langle z,m\rangle}{\|m\|_2^2}m .
\end{equation}
Therefore, the first-order tangential variation caused by the injection is
\begin{equation}
\begin{aligned}
    \Pi_{\mathcal{T}_t}(v_t+\eta\Delta x)
    -
    \Pi_{\mathcal{T}_t}(v_t)
    &=
    \eta \Pi_{\mathcal{T}_t}(\Delta x)  \\
    &=
    \eta
    \frac{\langle \Delta x,m\rangle}{\|m\|_2^2}m .
\end{aligned}
\end{equation}
Its magnitude is proportional to $|\langle \Delta x,m\rangle|$. Hence, an update
does not perturb the local motion tangent to first order if and only if
\begin{equation}
    \Pi_{\mathcal{T}_t}(\Delta x)=0
    \quad\Longleftrightarrow\quad
    \langle \Delta x,m\rangle=0 .
\end{equation}
This defines the admissible motion-neutral subspace $\mathcal{N}_t$.

\paragraph{Prompt-preserving semantic injection.}
The raw prompt displacement $\Delta p^{(m)}$ represents the desired semantic
change at the prompt boundary. Since only updates in $\mathcal{N}_t$ are
motion-neutral to first order, the injected semantic update should retain as
much of $\Delta p^{(m)}$ as possible while remaining in $\mathcal{N}_t$. This
leads to the minimum-distortion problem
\begin{equation}
\label{eq:appendix_motion_neutral_problem}
    \Delta p_{\perp}^{(m)}
    =
    \arg\min_{\Delta x\in\mathcal{H}}
    \|\Delta x-\Delta p^{(m)}\|_2^2
    \quad
    \mathrm{s.t.}\quad
    \langle \Delta x,m\rangle=0 .
\end{equation}
Equivalently, the solution is the motion-neutral component of
$\Delta p^{(m)}$ that maximally preserves the prompt displacement under the
first-order motion-neutrality constraint.

\subsection{Proof}

\begin{proof}[Proof of Theorem~\ref{thm:motion_neutral_projection}]
For compactness, denote
\begin{equation}
    q := \Delta p^{(m)} .
\end{equation}
The feasible set of Eq.~\eqref{eq:appendix_motion_neutral_problem} is
\begin{equation}
    \mathcal{C}
    :=
    \{z\in\mathcal{H}:\langle z,m\rangle=0\}
    =
    \mathcal{N}_t
    =
    m^\perp .
\end{equation}
Since $m\neq 0$, $\mathcal{C}$ is a closed linear subspace of the Hilbert space
$\mathcal{H}$. By the Hilbert projection theorem, every $q\in\mathcal{H}$ admits
a unique closest point in $\mathcal{C}$. Hence the minimizer of
Eq.~\eqref{eq:appendix_motion_neutral_problem} exists and is unique, and is the
orthogonal projection of $q$ onto $\mathcal{C}$.

It remains to compute this projection. Since
\begin{equation}
    \mathcal{C}^{\perp}
    =
    \operatorname{span}\{m\},
\end{equation}
there exists a unique orthogonal decomposition
\begin{equation}
\label{eq:appendix_decomposition}
    q = z_{\perp}+\alpha m,
    \qquad
    z_{\perp}\in\mathcal{C},
    \quad
    \alpha m\in\mathcal{C}^{\perp}.
\end{equation}
Taking the inner product of Eq.~\eqref{eq:appendix_decomposition} with $m$ gives
\begin{equation}
    \langle q,m\rangle
    =
    \langle z_{\perp},m\rangle
    +
    \alpha\|m\|_2^2 .
\end{equation}
Because $z_{\perp}\in\mathcal{C}$, we have $\langle z_{\perp},m\rangle=0$.
Therefore,
\begin{equation}
    \alpha
    =
    \frac{\langle q,m\rangle}{\|m\|_2^2}.
\end{equation}
Substituting this value back into Eq.~\eqref{eq:appendix_decomposition} yields
\begin{equation}
    z_{\perp}
    =
    q
    -
    \frac{\langle q,m\rangle}{\|m\|_2^2}m .
\end{equation}
Returning to $q=\Delta p^{(m)}$, we obtain
\begin{equation}
\label{eq:appendix_exact_projection}
    \Delta p_{\perp}^{(m)}
    =
    \Delta p^{(m)}
    -
    \frac{\langle \Delta p^{(m)},m\rangle}{\|m\|_2^2}m .
\end{equation}

We next verify the minimum-distortion optimality. For any feasible
$\Delta x\in\mathcal{C}$, since $q-z_{\perp}\in\mathcal{C}^{\perp}$ and
$\Delta x-z_{\perp}\in\mathcal{C}$, the two terms are orthogonal:
\begin{equation}
    \langle \Delta x-z_{\perp}, q-z_{\perp}\rangle = 0 .
\end{equation}
Hence the Pythagorean identity gives
\begin{equation}
\begin{aligned}
    \|\Delta x-q\|_2^2
    &=
    \|(\Delta x-z_{\perp})-(q-z_{\perp})\|_2^2 \\
    &=
    \|\Delta x-z_{\perp}\|_2^2
    +
    \|q-z_{\perp}\|_2^2        \\
    &\geq
    \|q-z_{\perp}\|_2^2 .
\end{aligned}
\end{equation}
Equality holds if and only if $\Delta x=z_{\perp}$. Therefore,
$z_{\perp}$ is the unique minimum-distortion update satisfying
$\langle \Delta x,m\rangle=0$.

Finally, we show that the same solution also maximally preserves the prompt
transition within the motion-neutral subspace. For any $\Delta x\in\mathcal{C}$,
using $q=z_{\perp}+\alpha m$ and $\langle \Delta x,m\rangle=0$, we have
\begin{equation}
    \langle \Delta x,q\rangle
    =
    \langle \Delta x,z_{\perp}\rangle .
\end{equation}
Thus, by the Cauchy--Schwarz inequality, any motion-neutral update with no larger
energy than $z_{\perp}$ satisfies
\begin{equation}
\label{eq:appendix_semantic_response_bound}
    \langle \Delta x,q\rangle
    =
    \langle \Delta x,z_{\perp}\rangle
    \leq
    \|\Delta x\|_2\|z_{\perp}\|_2
    \leq
    \|z_{\perp}\|_2^2
    =
    \langle z_{\perp},q\rangle .
\end{equation}
The upper bound is achieved by $\Delta x=z_{\perp}$. Hence
$z_{\perp}=\Delta p_{\perp}^{(m)}$ is not only the closest admissible vector to
the raw prompt displacement, but also the locally optimal motion-neutral update
that preserves the largest first-order prompt component under the same energy
budget. This proves Eq.~\eqref{eq:appendix_exact_projection} and completes the
proof.
\end{proof}

\subsection{Stabilized Implementation}

The exact projection in Eq.~\eqref{eq:appendix_exact_projection} requires
division by $\|m\|_2^2$. When the finite-difference motion estimate has very
small magnitude, this denominator may become numerically unstable. We therefore
use the stabilized implementation
\begin{equation}
\label{eq:appendix_stabilized_projection}
    \widehat{\Delta p}_{\perp}^{(m)}
    =
    \Delta p^{(m)}
    -
    \frac{\langle \Delta p^{(m)},m\rangle}
    {\|m\|_2^2+\epsilon}m,
    \qquad
    \epsilon>0 .
\end{equation}
This update converges to the exact projection as $\epsilon\rightarrow 0$:
\begin{equation}
    \lim_{\epsilon\rightarrow 0}
    \widehat{\Delta p}_{\perp}^{(m)}
    =
    \Delta p_{\perp}^{(m)} .
\end{equation}
Its residual first-order motion response is
\begin{equation}
\begin{aligned}
    \left\langle
    \widehat{\Delta p}_{\perp}^{(m)},m
    \right\rangle
    &=
    \left\langle
    \Delta p^{(m)}
    -
    \frac{\langle \Delta p^{(m)},m\rangle}
    {\|m\|_2^2+\epsilon}m,
    m
    \right\rangle        \\
    &=
    \frac{\epsilon}{\|m\|_2^2+\epsilon}
    \langle \Delta p^{(m)},m\rangle .
\end{aligned}
\end{equation}
Thus, the stabilized update preserves the same projection principle, approaches
exact motion orthogonality as $\epsilon$ decreases, and avoids numerical
amplification when the estimated local motion direction is unreliable.

\section{Further Experiments}
\label{sec:further_experiments}
This section provides supplementary ablation studies and detailed analyses regarding the architectural components of SWIFT. These evaluations offer further insights into the system's performance across various interactive generation scenarios.

\subsection{Temporal Injection Patterns}
\label{inject_patterns}
We investigate the influence of different injection schedules for the Semantic Injection Cache detailed in Section \ref{subsec:semantic_injection_cache} by comparing one-shot, continuously decayed, and constant injection modes. Specifically, one-shot injection inserts the full semantic update at the switching boundary without decay, continuously decayed injection denotes the default semantic injection strategy, and constant injection progressively inserts a fixed proportion of the semantic update over time. This analysis quantifies the trade-off between immediate semantic adaptation and the preservation of long-term temporal stability. Experiments are conducted in the multi-prompt setting by generating 100 videos of 60 seconds, and the results are evaluated using VBench \cite{huang2024vbench}.

Table~\ref{tab:semantic_injection_schedule} shows the quantitative results in terms of aesthetic quality, background consistency, imaging quality, motion smoothness, subject consistency, and temporal flickering. Compared with one-shot injection and constant injection, the continuously decayed injection strategy achieves the best overall visual quality, as it provides sufficient semantic guidance immediately after each prompt transition while gradually reducing the strength of injected bridge tokens in subsequent steps. This design prevents abrupt semantic overwriting caused by constant injection and avoids the insufficient long-range adaptation of one-shot injection, thereby yielding more coherent subject appearance, smoother motion, and reduced temporal flickering across prompt switches.

Table~\ref{yuyi_clip} reports the segment-wise CLIP scores under different temporal injection schedules. The results show that continuously decayed injection maintains stronger prompt alignment after semantic transitions, indicating that gradual decay enables more effective semantic adaptation than one-shot or constant injection. Figure~\ref{fig:yuyi} presents representative generated frames for the one-shot, constant, and continuously decayed injection schedules under the same multi-prompt sequence.
\begin{table}[h]
\centering
\caption{\textbf{Quantitative comparison under multi-prompt 60-second setting} with different injection schedules for the Semantic Injection Cache detailed in Section~\ref{subsec:semantic_injection_cache}, including one-shot, continuously decayed, and constant injection modes.}
\label{tab:semantic_injection_schedule}
\definecolor{lightgrayrow}{gray}{0.93}
\setlength{\tabcolsep}{2pt}
\renewcommand{\arraystretch}{1.05}
\resizebox{\columnwidth}{!}{%
\begin{tabular}{lcccccc}
\toprule
\textbf{Injection Schedule}
& \makecell{Aesthetic\\Quality$\uparrow$}
& \makecell{Background\\Consistency$\uparrow$}
& \makecell{Imaging\\Quality$\uparrow$}
& \makecell{Motion\\Smoothness$\uparrow$}
& \makecell{Subject\\Consistency$\uparrow$}
& \makecell{Temporal\\Flickering$\uparrow$} \\
\midrule
One-shot Injection              & 59.96 & 96.01 & 71.61 & 99.06 & 97.03 & 98.28 \\
Constant Injection              & 60.03 & 95.95 & 71.59 & 99.09 & 96.91 & 98.31 \\
Continuously Decayed Injection & \textbf{60.23} & \textbf{96.28} & \textbf{72.32} & \textbf{99.19} & \textbf{97.34} & \textbf{98.35} \\
\bottomrule
\end{tabular}%
}
\end{table}

\begin{table}[t]
\centering
\caption{\textbf{Quantitative comparison of temporal injection schedules.} We evaluate one-shot, constant, and continuously decayed injection under the multi-prompt 60-second setting. The CLIP \cite{radford2021learning} score is computed for each 10-second segment according to the corresponding prompt semantics, where continuously decayed injection achieves the best overall alignment across prompt transitions.}
\label{yuyi_clip}
\definecolor{lightgrayrow}{gray}{0.93}
\setlength{\tabcolsep}{2.5pt}
\renewcommand{\arraystretch}{1.0}
\small
\resizebox{0.8\columnwidth}{!}{%
\begin{tabular}{lcccccc}
\toprule
\multirow{2}{*}{\textbf{Injection Schedule}} 
& \multicolumn{6}{c}{CLIP Score$\uparrow$} \\
\cmidrule(lr){2-7}
& 0--10\,s & 10--20\,s & 20--30\,s & 30--40\,s & 40--50\,s & 50--60\,s \\
\midrule
One-shot Injection 
& \textbf{26.60} & 25.42 & 23.74 & 23.85 & 24.18 & 23.67 \\
Constant Injection
& 26.32 & 25.77 & 23.18 & 23.76 & 24.25 & 23.89 \\
Continuously Decayed Injection 
& {26.53} & \textbf{25.86} & \textbf{24.71} & \textbf{24.04} & \textbf{24.68} & \textbf{24.13} \\
\bottomrule
\end{tabular}%
}
\end{table}

\setlength{\textfloatsep}{5pt}
\begin{figure*}[t]
  \centering
  \includegraphics[width=\textwidth]{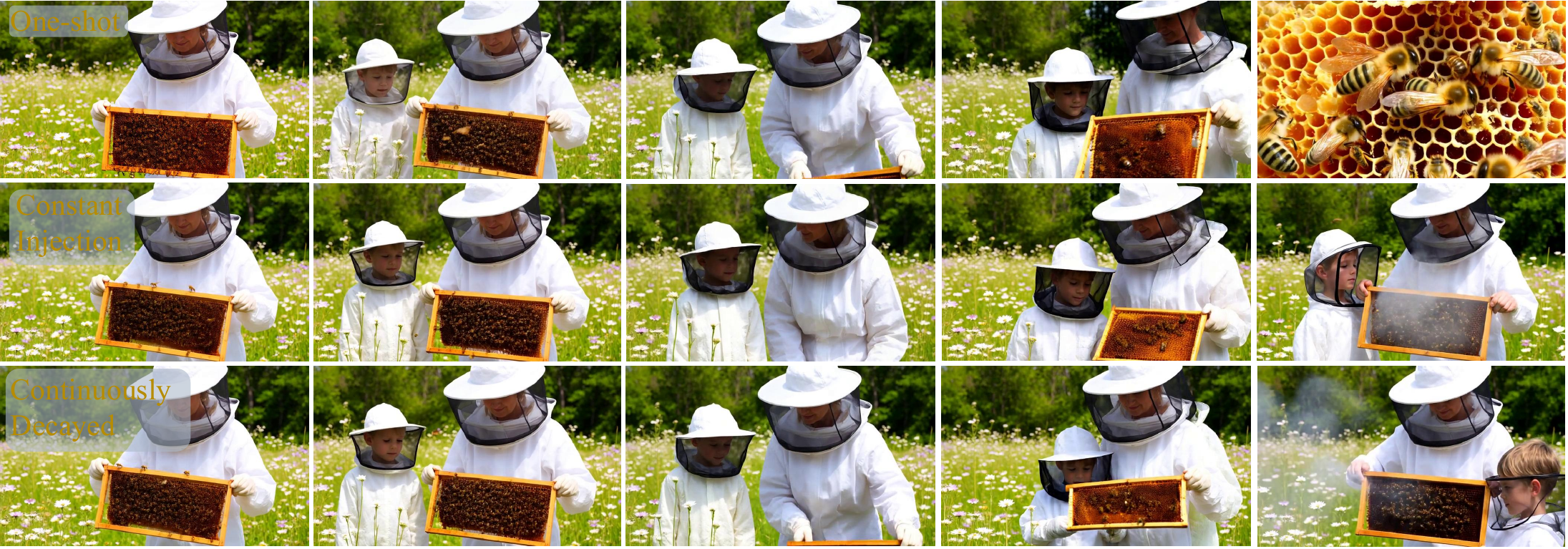}
  \caption{\textbf{Qualitative examples of temporal injection schedules.} We visualize generated frames from one-shot, constant, and continuously decayed injection under the same multi-prompt sequence. One-shot injection causes abrupt semantic changes, constant injection shows weaker adaptation after prompt switches, and continuously decayed injection provides smoother semantic transitions while preserving temporal consistency.}
  \label{fig:yuyi}
\end{figure*}

\begin{figure*}[t]
  \centering
  \includegraphics[width=\textwidth]{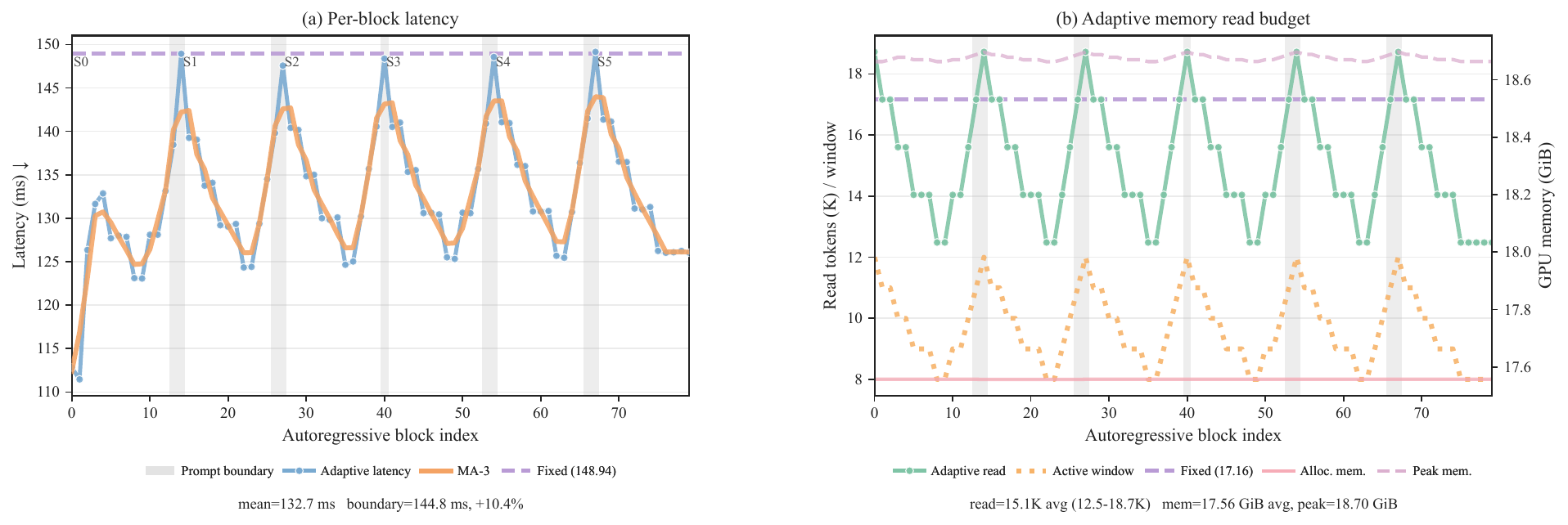}
  \caption{
\textbf{Per-block efficiency of SWIFT.}
Gray bands mark prompt-switch boundaries, and dashed lines denote fixed-window references.
SWIFT shows only mild latency increases at prompt transitions, while its adaptive memory schedule expands the read budget near boundaries and contracts it in stable segments, with nearly constant GPU memory usage.
}
  \label{fig:efficiency}
\end{figure*}

\subsection{Per Block Efficiency Trace}
\label{subsec:block_efficiency_trace}

We further analyze the runtime behavior of SWIFT at the autoregressive block level.
The profiler records block latency, effective memory read budget, active window size, allocated GPU memory, and peak allocated GPU memory.
Prompt switch boundaries are also logged.
These metrics measure the local cost of semantic transition and the memory budget used during stable rollout. We use an autoregressive generation baseline with a fixed window size and ReCache at prompt boundaries as the comparison setting.

Figure~\ref{fig:efficiency} shows the efficiency trace per block.
SWIFT shows mild latency increases near prompt switch boundaries.
The average latency is 132.7 ms, and the average boundary latency is 144.8 ms. The result indicates stable prompt switching without severe runtime spikes.
The fixed window reference uses a constant large local window across all blocks.
This strategy keeps a high read budget during both transition phases and stable phases.
In contrast, SWIFT increases the read budget near prompt boundaries and reduces it during stable segments.
The effective read budget averages 15.1K tokens and ranges from 12.5K to 18.7K tokens.
GPU memory remains nearly constant, with 17.56 GiB average usage and 18.70 GiB peak usage.

The efficiency advantage of SWIFT comes from prompt phase aware memory allocation.
A larger window supplies sufficient context for semantic handover near prompt switches.
A smaller window reduces redundant attention computation during stable generation.
Segment anchors preserve compact long range semantics under the reduced local window.
This design improves runtime efficiency over the fixed window setting while retaining stable memory usage.
\begin{table}[t]
\centering
\caption{\textbf{Quantitative comparison under different multi-prompt video lengths.}
We compare SWIFT with LongLive across 30, 45, 60, 75, and 90 seconds using VBench quality and consistency scores.}
\label{tab:different_video_lengths}
\definecolor{lightgrayrow}{gray}{0.93}
\setlength{\tabcolsep}{3.5pt}
\renewcommand{\arraystretch}{1.08}
\small
\resizebox{0.95\columnwidth}{!}{%
\begin{tabular}{lcccccccccc}
\toprule
\multirow{2}{*}{\textbf{Method}} 
& \multicolumn{2}{c}{\textbf{30s}} 
& \multicolumn{2}{c}{\textbf{45s}} 
& \multicolumn{2}{c}{\textbf{60s}} 
& \multicolumn{2}{c}{\textbf{75s}} 
& \multicolumn{2}{c}{\textbf{90s}} \\
\cmidrule(lr){2-3}
\cmidrule(lr){4-5}
\cmidrule(lr){6-7}
\cmidrule(lr){8-9}
\cmidrule(lr){10-11}
& \makecell{Quality\\Score$\uparrow$}
& \makecell{Consistency\\Score$\uparrow$}
& \makecell{Quality\\Score$\uparrow$}
& \makecell{Consistency\\Score$\uparrow$}
& \makecell{Quality\\Score$\uparrow$}
& \makecell{Consistency\\Score$\uparrow$}
& \makecell{Quality\\Score$\uparrow$}
& \makecell{Consistency\\Score$\uparrow$}
& \makecell{Quality\\Score$\uparrow$}
& \makecell{Consistency\\Score$\uparrow$} \\
\midrule
LongLive~\cite{yang2026longlive}
& 83.64 & 96.08
& 83.76 & 96.31
& 83.88 & 96.48
& 84.47 & 96.56
& {84.58} & 96.67 \\
\rowcolor{lightgrayrow}
\textbf{SWIFT}
& \textbf{84.72} & \textbf{96.43}
& \textbf{84.83} & \textbf{96.51}
& \textbf{84.97} & \textbf{96.62}
& \textbf{85.01} & \textbf{96.74}
& \textbf{85.06} & \textbf{96.89} \\
\bottomrule
\end{tabular}%
}
\end{table}
\subsection{Multi-Prompt Generation with Different Video Lengths}
\label{subsec:different_video_lengths}

We further evaluate SWIFT under different generation lengths to examine its scalability in multi-prompt long-video generation. Specifically, we vary the target video duration across 30, 45, 60, 75, and 90 seconds while keeping the six-prompt interactive setting. The prompt switching boundaries are uniformly scaled according to the target duration, and all remaining model configurations, inference hyperparameters, and evaluation protocols are kept unchanged.

Table~\ref{tab:different_video_lengths} reports the video quality metrics on the VBench benchmark across different generation lengths. Compared with LongLive, SWIFT consistently maintains better overall visual quality across varying temporal horizons, demonstrating the scalability and robustness of the proposed memory mechanism for multi-prompt long-video generation. Figure~\ref{fig:diff_time_case} shows representative frames from the 30-second six-prompt setting, where SWIFT preserves the main subject and interaction structure more consistently across prompt transitions. Under the shorter 30-second generation setting, rapid semantic injection enables SWIFT to respond more effectively to dense prompt switches while maintaining coherent subject appearance and scene continuity.

\begin{figure*}[h]
  \centering
  \includegraphics[width=\textwidth]{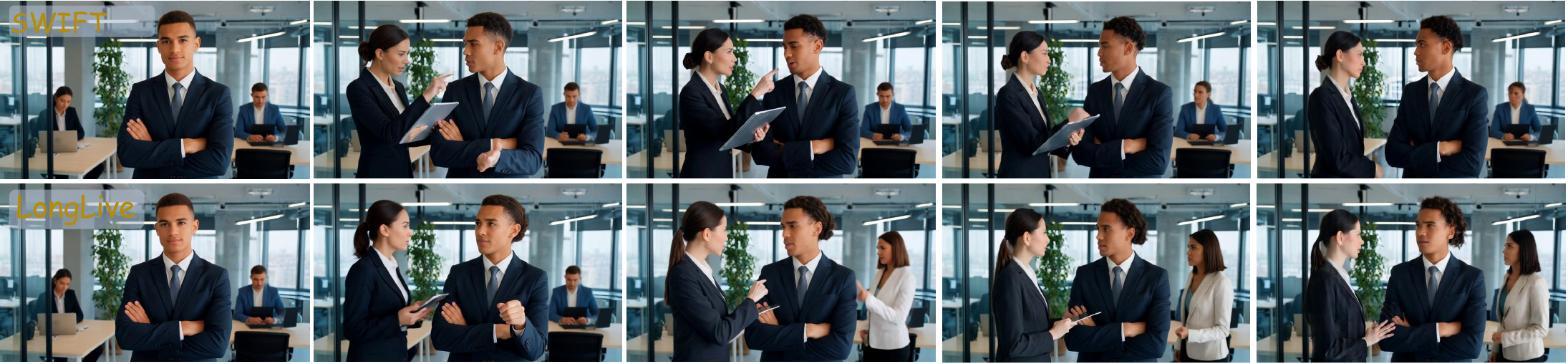}
  \caption{\textbf{Qualitative example of 30-second multi-prompt video generation.} We visualize representative frames generated by SWIFT and LongLive under the same six-prompt sequence.}
  \label{fig:diff_time_case}
\end{figure*}

\subsection{Case Study}
\label{subsec:case_study}

We present additional qualitative results in Figure~\ref{fig:case_study_1} and Figure~\ref{fig:case_study_2} to illustrate the videos generated by SWIFT from multi-prompt inputs. These examples show that SWIFT can follow sequential prompt changes while preserving coherent visual progression across long video generation.

\begin{figure*}[p]
  \centering

  \includegraphics[width=\textwidth,height=0.42\textheight,keepaspectratio]{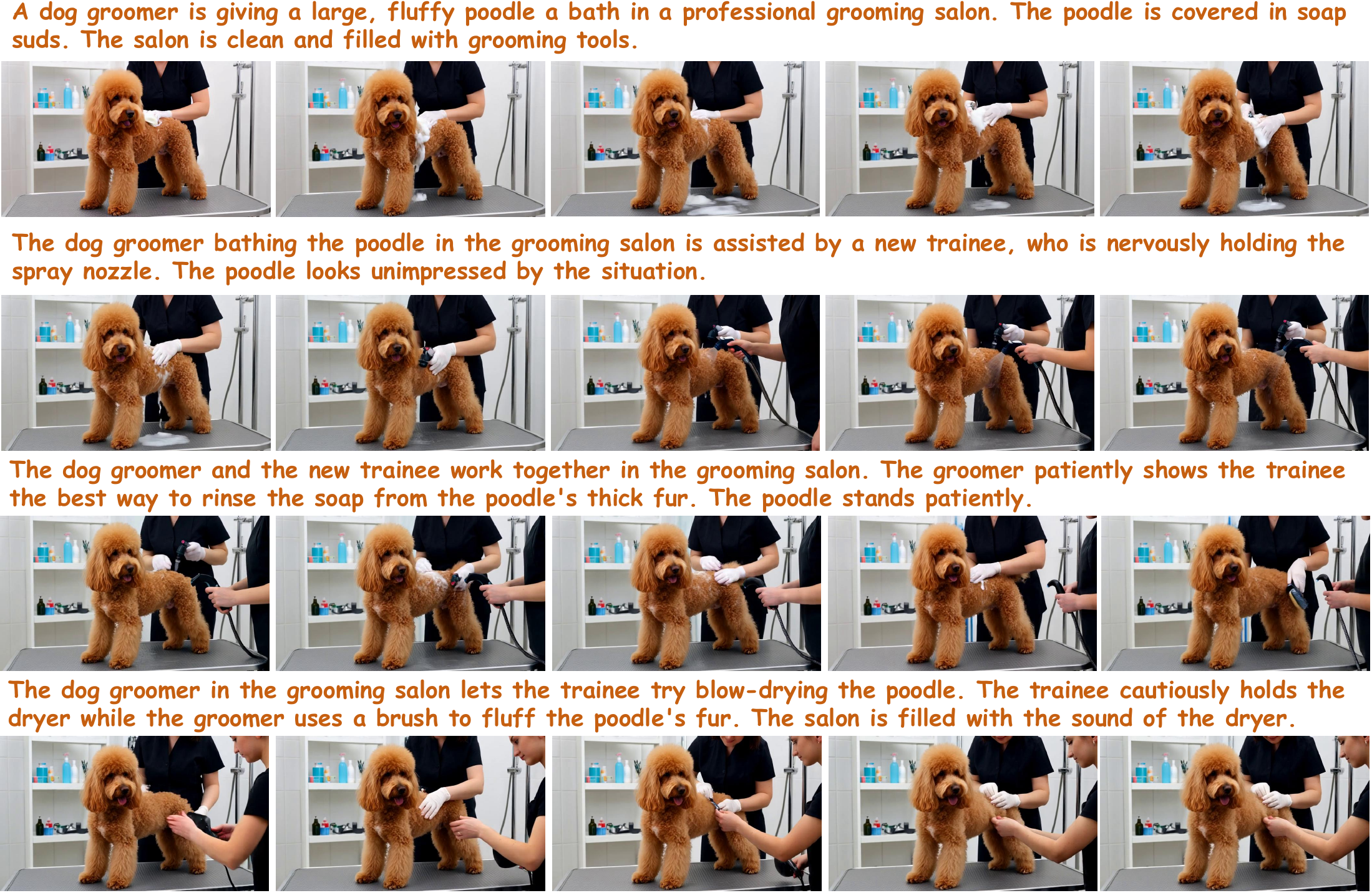}
  \caption{\textbf{Qualitative example of 60-second multi-prompt video generation.} We visualize representative frames generated by SWIFT under a six-prompt sequence.}
  \label{fig:case_study_1}

  \vspace{0.5em}

  \includegraphics[width=\textwidth,height=0.50\textheight,keepaspectratio]{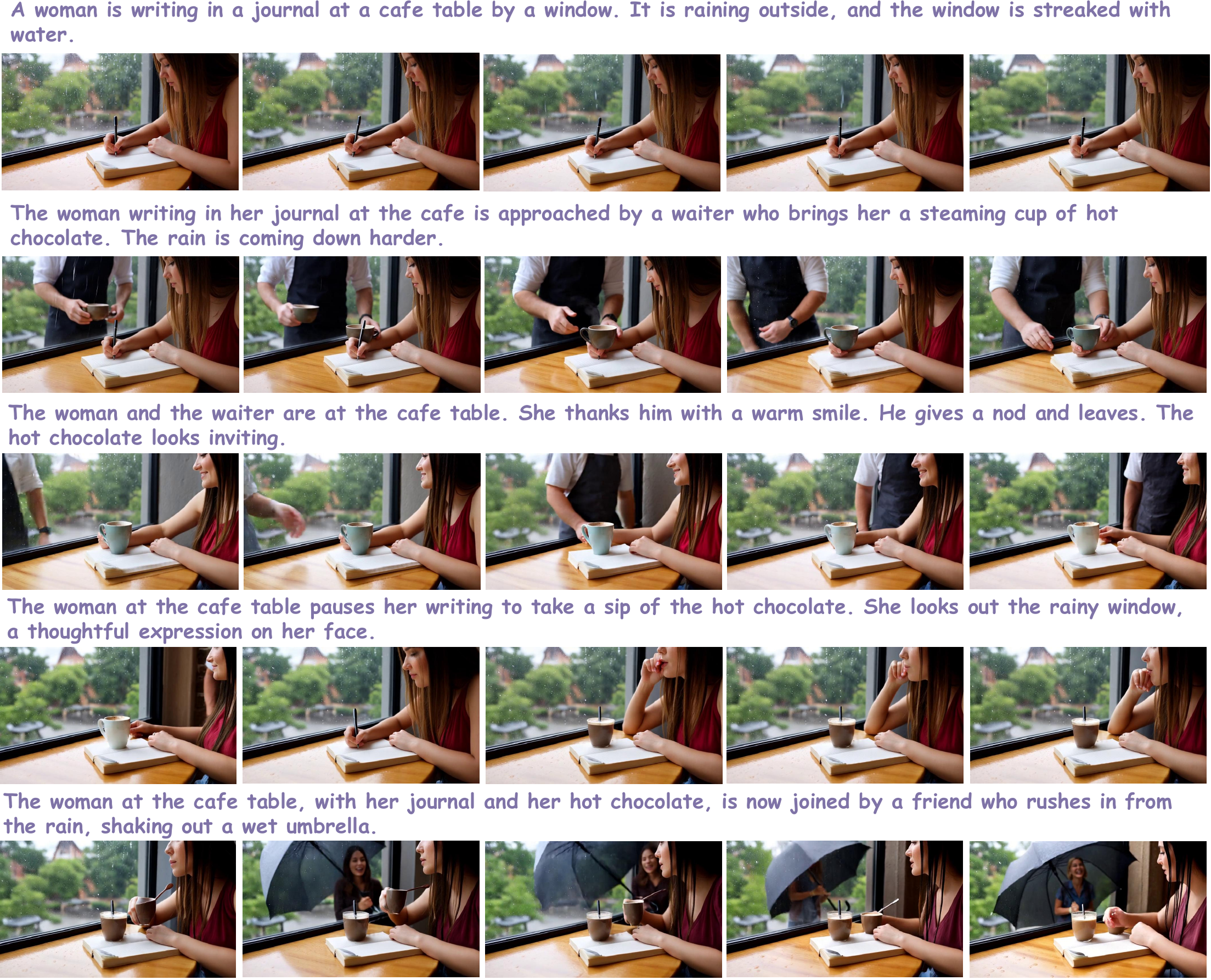}
  \caption{\textbf{Qualitative example of 60-second multi-prompt video generation.} We visualize representative frames generated by SWIFT under a six-prompt sequence.}
  \label{fig:case_study_2}

\end{figure*}
\section{Limitation Analysis}
\label{limitation}

SWIFT is a training-free inference-time framework built upon a pretrained causal video diffusion backbone. Its generation quality therefore remains bounded by the visual fidelity, motion prior, and instruction-following ability of the underlying base model. SWIFT improves semantic transition and memory utilization through lightweight cache injection and prompt-phase adaptive memory allocation, but it does not introduce new visual concepts or correct intrinsic failure modes of the backbone. As a result, difficult cases involving rare objects, fine-grained physical interactions, crowded scenes, or large camera motion may still produce artifacts or inconsistent details. Future work can combine SWIFT with supervised transition data, stronger base video generators, and learned memory scheduling to further improve robustness under complex interactive generation scenarios.
As shown in Figure \ref{fig:case_failure}, the method responds to the newly introduced research partner, but fine-grained facial structure and identity consistency degrade in several frames, as highlighted by the red boxes.
This case illustrates that training-free semantic memory adaptation improves prompt transition and temporal continuity, but cannot fully eliminate backbone-level artifacts under detailed human interaction and compositional prompt changes.
\begin{figure*}[t]
  \centering
  \includegraphics[width=\textwidth]{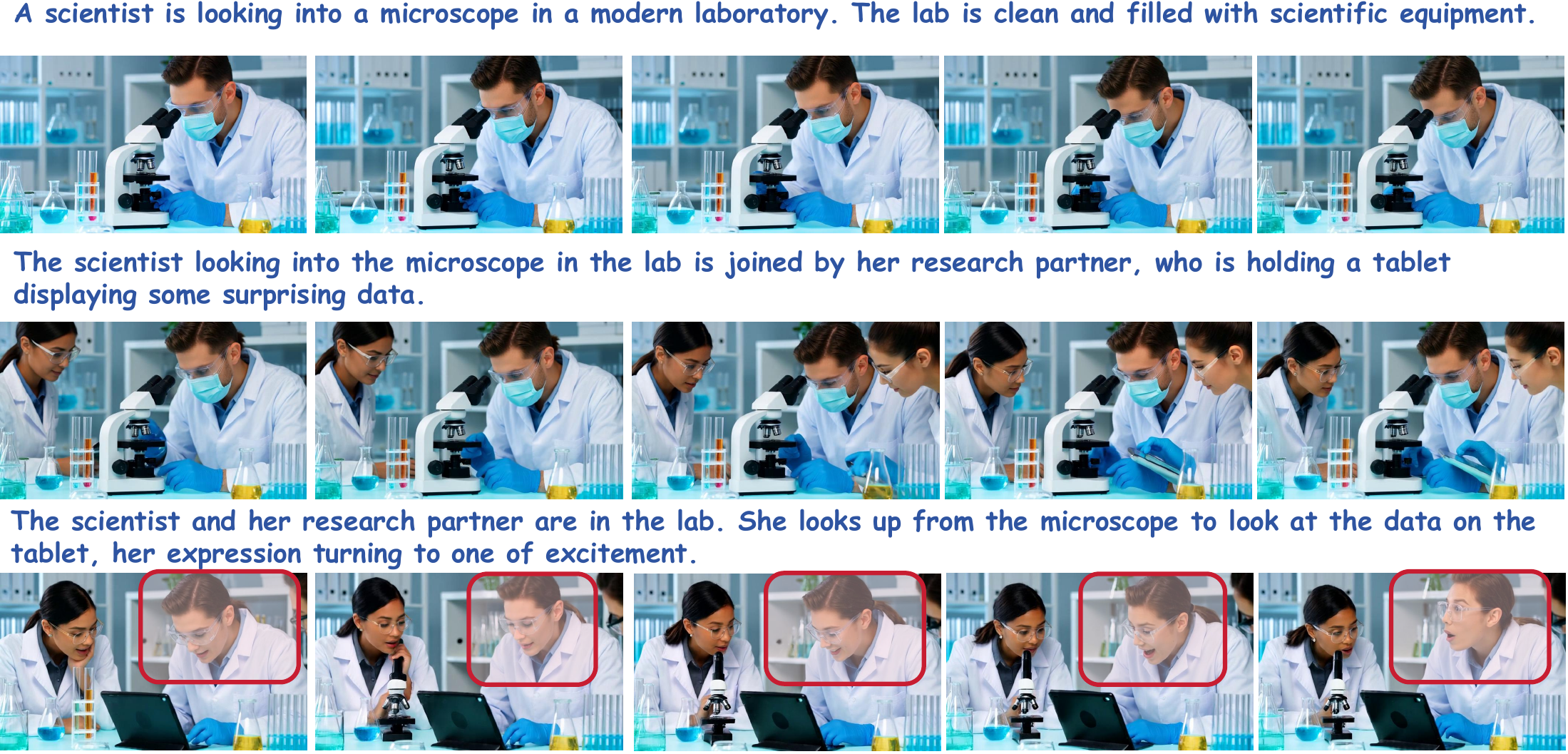}
  \caption{\textbf{Failure case of SWIFT under complex multi-prompt generation.} 
Although SWIFT preserves the laboratory scene and maintains a coherent visual layout across prompt transitions, it can still inherit limitations from the pretrained video diffusion backbone. }
  \label{fig:case_failure}
\end{figure*}


\newpage

\end{document}